\newcommand{\stkout}[1]{\ifmmode\text{\sout{\ensuremath{#1}}}\else\sout{#1}\fi}
\newcommand\norm[1]{\left\lVert#1\right\rVert}
\title{Eigenbackground Revisited: Can We Model the Background with Eigenvectors?}
\author{Mahmood Amintoosi      \and
        Farzam Farbiz %etc.
}
\institute{Mahmood Amintoosi \at
              Faculty of Mathematics and Computer Science, Hakim Sabzevari University, Iran \\
              \email{m.amintoosi@hsu.ac.ir}           
           \and
           Farzam Farbiz \at
              A*STAR Institute of High Performance Computing (IHPC), 1 Fusionopolis Way, \#16-16 Connexis, Singapore 138632 \\
              \email{Farzam\_Farbiz@ihpc.a-star.edu.sg}   
}
\date{Received: date / Accepted: date}
\begin{document}

\maketitle

\begin{abstract}
Using dominant eigenvectors for background modeling (usually known as Eigenbackground) is a common technique in the literature. However, its results suffer from noticeable artifacts.  Thus, there have been many attempts to reduce the artifacts by making some improvements/enhancements in the Eigenbackground algorithm.
In this paper, we show the main problem of the Eigenbackground is at its own core and in fact, it may not be a good idea to use the strongest eigenvectors for modeling the background. Instead, we propose an alternative solution by exploiting the weakest eigenvectors (which are usually thrown away and treated as garbage data) for background modeling.  MATLAB  codes are available at the GitHub of the paper
\footnote{\url{https://github.com/mamintoosi/Eigenbackground-Revisited}}.\\
\textit{Keywords}--- Eigenbackground, Background Modeling, Background Subtraction,
Principal Component Analysis, Gaussian Mixture Model, Video Analysis.
\end{abstract}

%\tableofcontents

\section{Introduction}

Background segmentation is one of the fundamental tasks in computer vision with a wide spectrum of applications from video compression to scene understanding. A very commonly used example of background segmentation is on detecting moving objects in videos taken from static cameras, by finding the differences between the new frame and the background model of the scene (or the reference frame).

There have been several background subtraction and modeling methods in the literature. All of these methods aim to effectively estimate the background model from a temporal sequence of  video frames. One of the well-known methods in this field is the Eigenbackground method \cite{Oliver2000PCA}, which  models the background by a set of dominant eigenvectors. 
The idea has been used in many papers including 
\cite{citeulike:8163606,skocajIMAVIS08,Dickinson20091326,Yuan20092450,Casares20101223,Dong201131,Tzevanidis2011105,
guyon:hal-00811439,Vosters20121004,Krishna2012624,Zhao20121134,Yeo20131583,
Seger20142098,Spampinato201474,Bouwmans201431,
Varadarajan20153488,DBLP:journals/corr/ShakeriZ15,Dou2015382,conf/pcm/XuSG06,
DBLP:journals/cviu/ChenTWH16,
Wan2018Total,
Banu2020SC,Djerida2020}
for background segmentation. In some literature, this method has been also called as PCA and Subspace Learning.

According to  \cite{Oliver2000PCA} the core idea of the Eigenbackground method is as follow:
\begin{quotation}
	``In order to reduce the dimensionality of the
	space, in principal component analysis (PCA) only $M$
	eigenvectors (eigenbackgrounds) are kept, corresponding to
	the $M$ largest eigenvalues to give a $\Phi_M$ matrix.''
\end{quotation}

The advantage of this method is that it is a mathematically straightforward algorithm, which requires no heuristics or parameters that must be set manually.
However, most of the research articles that used this method highlighted its rather poor quality result in background modeling especially in the presence of some foreground objects in the input video frames. see figures \ref{fig:Dou2015382_Fig8}  and \ref{fig:Dong201131_Fig4} as examples of the Eigenbackground's performance in modeling the background.

In \cite{Oliver2000PCA} it is claimed that the eigenspace provides a model for the background:

\begin{quotation}
``\dots the eigenspace
provides a robust model of the probability distribution
function of the background, but not for the moving objects''

\end{quotation}
In contrast to this statement, in this paper
 we will show that the eigenspace is indeed influenced by moving objects, not by the static parts of the scene. This is actually the same as the concept of eigenfaces, which models the faces and not the background.

 \begin{figure}[t]
 \centering
\includegraphics[width=1\columnwidth]{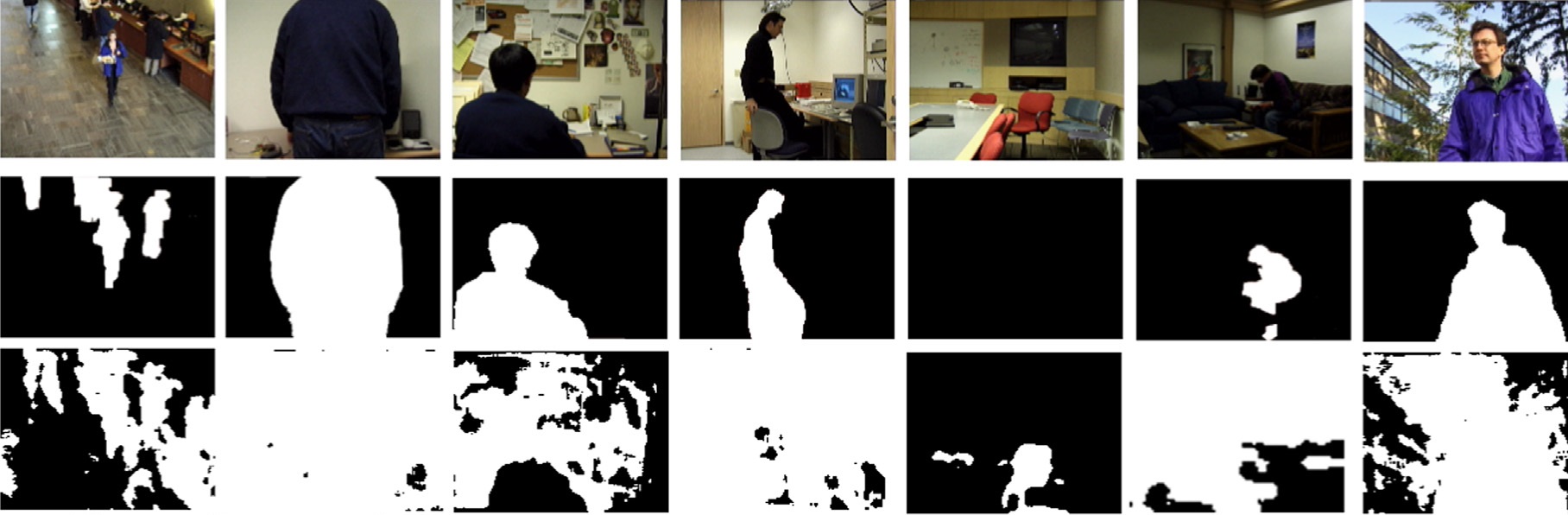}
\caption{The result of the Eigenbackground (Fig. 8 of \cite{Dou2015382}) in background modeling. From top to bottom, the original images, ground truth and the results of  foreground detection using background models created by the Eigenbackground method (courtesy from \cite{Dou2015382}).}
\label{fig:Dou2015382_Fig8}
\end{figure}

\begin{figure}[t]
\centering
%\subfigure[\hl{Original input images for learning}]
{\label{fig::Dong201131_Fig4:a}
\includegraphics[width=1\linewidth]{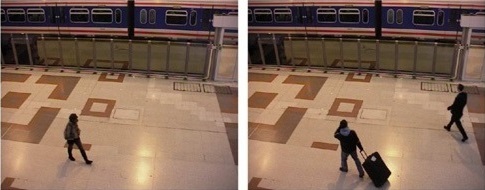}}\\
%\subfigure[\hl{Learned \textit{polluted} background due to foreground objects}]
{\label{fig::Dong201131_Fig4:b}
\includegraphics[width=1\linewidth]{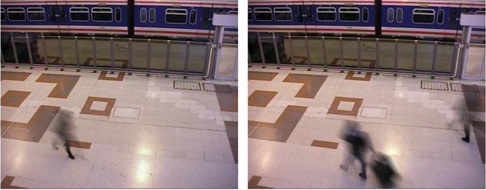}}\\
\caption{The low quality of the Eigenbackground method in background modeling due to the presence of some foreground objects in the scene (Fig. 4 of \cite{Dong201131}, curtesy from  \cite{Dong201131}). Top row: Input images. Bottom row: Created background model.}

\label{fig:Dong201131_Fig4}
\end{figure}

The rest of this paper is organized as follows.
In Section \ref{sec:review} we review the Eigenbackground algorithm by looking into the literature review and study on the core problem related to this method. We also propose an alternative method to the original Eigenbackground method in this section.
 Next, we discuss the theoretical aspects of the eigenspace in  Section \ref{Sec:Math}. Further examples and experiments are demonstrated in Section \ref{Sec:Exp}. The conclusions are provided in Section \ref{Sec:Conc}. Finally, we explain some of the lemmas used in the paper in the Appendix.

In summary, the contributions of this paper are as follows:
\begin{enumerate}
\item To prove why using the Strongest EigenVectors (SEV) in the Eigenbackground algorithm is not suitable for background modeling
\item To propose an alternative to Eigenbackground, by using the Weakest EigenVectors (WEV)
 for modeling the background of a video frame.
\end{enumerate}

\section{Review of the Eigenbackground Algorithm}\label{sec:review}

Eigenbackground represents the background as a set of dominant eigenvectors, extracted from a set of training images. As mentioned in \cite{Oliver2000PCA}:
\begin{quotation}
``Note that moving objects, because they don't appear in
the same location in the N sample images and they are
typically small, do not have a significant contribution to this
model. Consequently, the portions of an image containing a
moving object cannot be well-described by this eigenspace
model (except in very unusual cases), whereas the static
portions of the image can be accurately described as a sum
of the various eigenbasis vectors. That is, the eigenspace
provides a robust model of the probability distribution
function of the background, but not for the moving objects.''
\end{quotation}
The dominant eigenvectors are then used to estimate static parts of the scene to form the background image in this method.

However, there has been no proof for the above assumption. Although the moving objects might cover only small parts from the video frames, it does not mean they will not affect the dominant eigenvectors in a significant way.

To test the above assumption, which is indeed the core of the Eigenbackground algorithm, let's review the eigenvectors in a matrix.
For this review, we use the concept of Principal Component Analysis (PCA) as its eigenvectors are the basis vectors of the eigenspace.

It is well-known that the first principal component has the largest possible variance (that is, accounts for as much of the variability in the data as possible), and each succeeding component in turn has the highest variance possible under the constraint that it is orthogonal to the preceding components. Knowing PCA components are calculated using eigenvectors of the matrix, the most dominant eigenvector should also be in the direction with the largest possible variance.  

For background modeling of a video data, the variance is expected to be higher for the frames with foreground objects (as they are likely to be very different  from each other) compared to the frames that mostly contain background. Thus, the strongest eigenvector will be affected by foreground frames. 
 Consequently, we should expect some ghosting effect in the background models generated by the Eigenbackground method as it is based on the strongest eigenvectors.

Figure \ref{fig:Dou2015382_Fig8} from \cite{Dou2015382} shows some examples of using the Eigenbackground method in background modeling and foreground objects detection. However, as can be seen in this figure, the results of foreground objects detection are poor. This was due to the presence of some foreground objects in every  video frame leading to have poor background models. As another example, Figure \ref{fig:Dong201131_Fig4} from \cite{Dong201131} shows low quality of the background model due to foreground objects.

The low performance of Eigenbackground method has highlighted in the literature too.  For instance, Vosters et al. mentioned in \cite{Vosters20121004}, ``It is imperative the background images in the training set do not contain any foreground objects, since they would cause significant errors in the reconstructed background''.  As another example, Shah et al. in \cite{Shah2017} stated that ``Based on the extensive literature survey conducted, it has been observed that, even though the Eigenbackground subtraction method is computationally efficient, the motion detection is not as accurate as that of other popular motion detection algorithm''. Bouwmans in \cite{Bouwmans2009SubspaceLF} also highlighted limitation of the Eigenbackground method in dealing with large size foreground objects.

Nevertheless, the focus on previous works in this field was mainly on applying different techniques to improve/enhance the performance of the original Eigenbackground method. For example, Cao et al. in \cite{Cao2008} and Ziubinski et al. in \cite{Ziubi_ski_2014}  used small blocks for eigenvector analysis as there is higher chance for each block to be free from any foreground objects in some of video frames. Furthermore, changes in the image illumination would be more uniform for a small block (compared to the whole image frame) and hence such illumination changes will not significantly affect the background modeling result. As another example, Kim et al. in \cite{Kim_2013}  constructed an Eigenbackground model using a selected eigenvectors and then used this background model as input data for an MoG model to generate the final background model. Similarly, Hughes et al. in \cite{Hughes2013} used a subset of eigenvectors  and then improved the result using bootstrapping training.

Some other works such as \cite{citeulike:8163606,guyon:hal-00811439} tried to use different variations of PCA, including Robust PCA, to improve the performance of Eigenbackground in background  subtraction. However, if the training set does not include any foreground objects as suggested in  \cite{Dong201131}, then there is no need to use PCA or Eigenbackground algorithm to model the background and a simpler background modeling with less processing time is sufficient. Indeed, some technical papers use the idea of combining the mean image and the first significant eigenvectors of PCA for background modeling (see \cite{Bouwmans201431} and \cite{conf/pcm/XuSG06}).

The aim of this paper is to study the core problem associated with the Eigenbackground technique. To do so, let's analyze the eigenvectors for a test video. Figure \ref{fig:SampleVideo} shows a sample highway video with 121 frames. We took a $40\times40$ block of this video  and calculated the  strongest and weakest eigenvectors shown in Figure \ref{fig:highway:40}. As can be seen, and as expected, the strongest eigenvectors are heavily influenced by foreground objects in the scene.

\begin{figure}[t]
\centering
\includegraphics[width=1\linewidth]{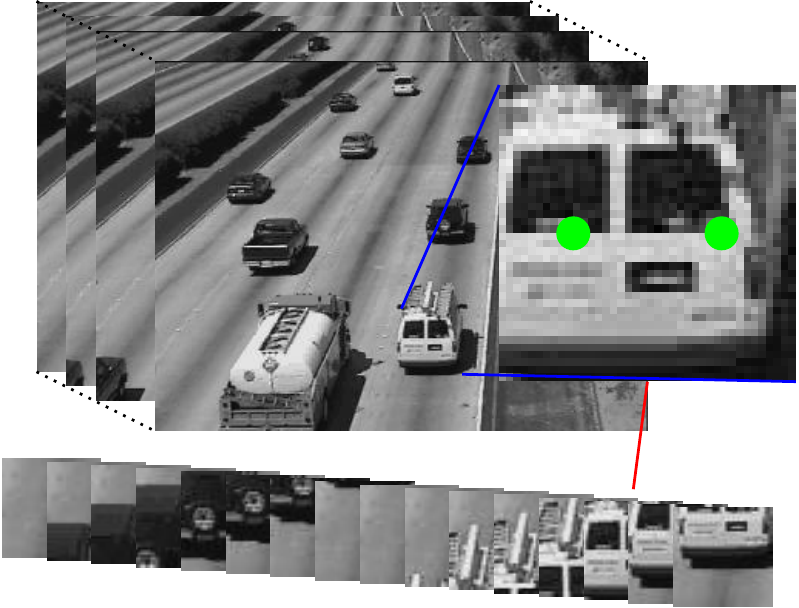}
\caption{Sample frames from a highway video scene and the selected block and pixels for Eigenvector analysis.}
\label{fig:SampleVideo}
\end{figure}

\begin{figure}[t]
\centering
\includegraphics[width=0.9\linewidth]{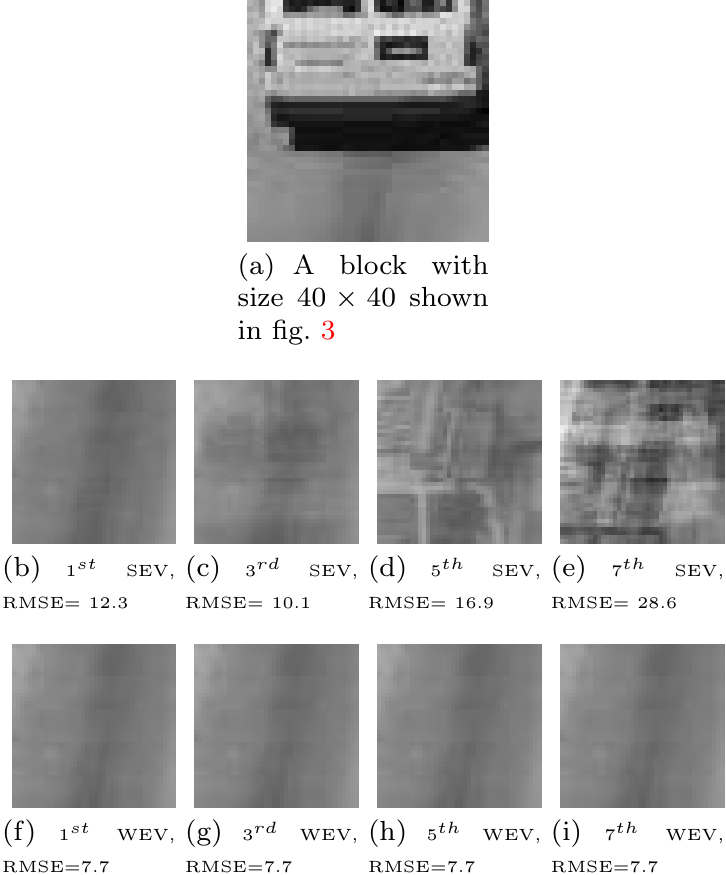}
\caption{Background model created by reconstructing of the projection of a $40\times40$ block of frame 16 (a), onto the each of the first  {1,3,5,7}  and the last eigenvectors with their corresponding RMSE from the ground-truth background model.  
}
\label{fig:highway:40}
\end{figure}

To further analyze how foreground and background objects affect the eigenvectors, we took the two green pixels from this block so we can display every frame for these two pixels in a 2D plot as shown in Figure \ref{fig:PC1and2}. In this figure, blue crosses are for background frames (92 frames from the total of 121 frames) and red circles are for foreground frames (29 frames)
\footnote{For better demonstration, the frames' order is perturbed.}.
 There are four plots in this figure that shows how the green line representing the most dominant eigenvector is changed based on the foreground data to the direction that has the maximum variance.
As can be seen, and will be mathematically explained in Section \ref{Sec:Math}, almost all background instances are located at one point, while the first principal component is significantly influenced by foreground instances.

Since the main eigenvector tends to be in the direction that maximizes the variance for input data, and each succeeding component in turn has the highest variance possible under the constraint that it is orthogonal to the preceding components, it would be logical to assume the least significant eigenvectors might be the right candidate to model the background image frames. The background image frames are not exactly equal to each other so when we put all frames in one $m\times n$ matrix ($m\ge n$), the matrix rank will be the same as the number of frames. But the differences between background frames are much less than foreground ones, and all background frames can be considered as a noisy version of the background model with some small variances. This is similar to the concept of Mixture of Gaussian (MoG)\cite{Stauffer1999Adaptive}, where it is assumed pixels with less variance in video frames likely belong to the background and those with higher variance likely belong to foreground objects. Here we can also consider least significant eigenvectors represent background regions.

\begin{figure}%[t]
\centering
\includegraphics[width=1\linewidth]{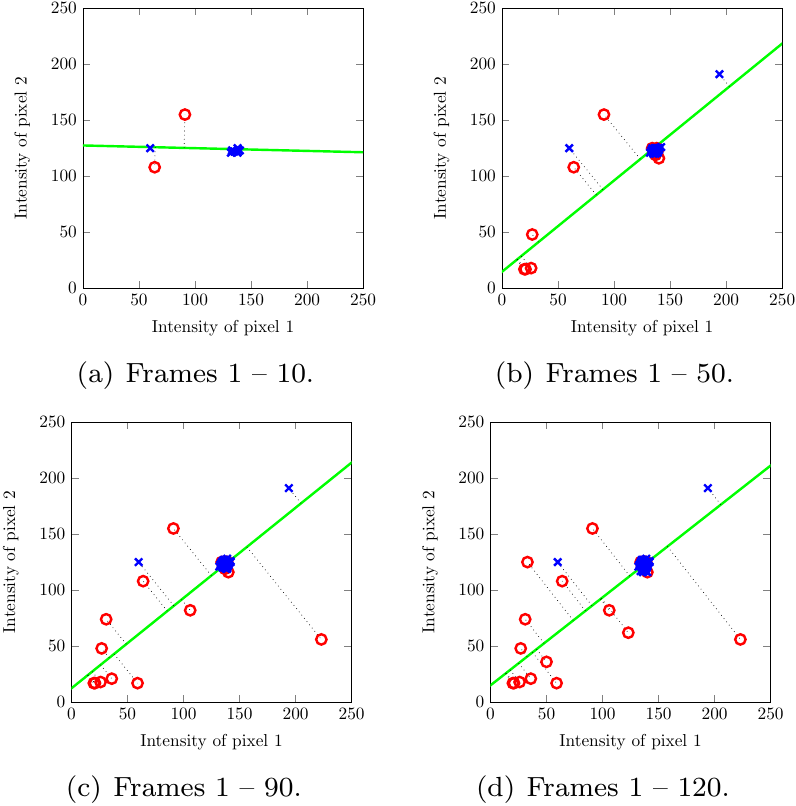}
\caption{Change of dominant PCA vector  in response to new input frame for two green dotted pixels shown in fig \ref{fig:SampleVideo}. The results are calculated and shown for the first 10, 50, 90, and 120 frames.  The background frames are shown by blue crosses and the foreground frames by red circles. The first PCA vector is in the direction that input frames have maximum variance, which is significantly influenced  by foreground frames.
}
\label{fig:PC1and2}
\end{figure}

To test this idea, we used the \textit{highway}  video in Figure \ref{fig:SampleVideo}, split it into $40\times40$ blocks, and calculated  the background using i) 10 strongest and ii) 10 weakest eigenvectors. The results for frame No. 16 are shown in figures \ref{fig:PCBG} and \ref{fig:NonPCBG}. As can be seen in this figure, the background model is better reconstructed using the weakest eigenvectors.

\section{Theoretical Aspects  of Eigenspace}\label{Sec:Math}

The main idea of this paper is to demonstrate the eigen-space produced by the strongest eigenvectors is not a good candidate for the background subspace.
In this section we demonstrate this by showing that the direction of the strongest eigenvector in PCA is influenced by foreground instances.
We start with  some preliminary assumptions and theorems that are needed for this mathematical discussion. 

Let $X = \{x_1,\dots,x_n\}$ be the set of $n$ observations with the dimensionality of $D$. The goal of PCA is to project  the data onto a space having dimensionality of  $M<D$ while maximizing the variance of the projected data. Consider the case for one-dimensional space ($M=1$), where the data is projected into a single unit vector $e$. The goal here is to find a vector $e$ such that the projection of  $x \in X$  along this direction has the maximum variance. Each data point $x_k$ is projected onto a scalar value $e^Tx_k$. The mean of the projected data is $e^T\mu$, where $\mu$ is the sample set mean. 
The variance of the projected data is given by:
\begin{align}
\frac{1}{n}\sum_{k=1}^n(e^Tx_k-e^T\mu)^2 =  \frac{1}{n}e^TSe
\end{align}
where $S$ is the scatter matrix:
\begin{align}
S = \sum_{k=1}^n (x_k-\mu)(x_k-\mu)^T  \nonumber
\end{align}
Using the Lagrange multipliers method, the eigenvector corresponding to the largest eigenvalue of $S$, maximizes $e^TSe$.
\begin{figure}[t]
\centering
\subfigure[Frame No. 16]
{\label{fig:NonPCBG:Frame}
\includegraphics[width=.31\linewidth]{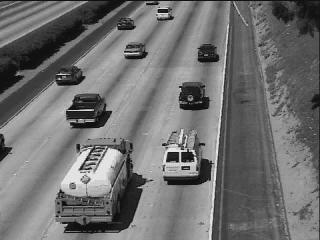}}
\subfigure[Background]
{\label{fig:PCBG:BG}
\includegraphics[width=.31\linewidth]{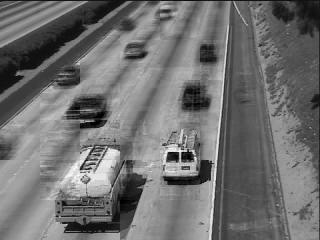}}
\subfigure[Foreground]
{\label{fig:PCBG:FG}
\includegraphics[width=.31\linewidth]{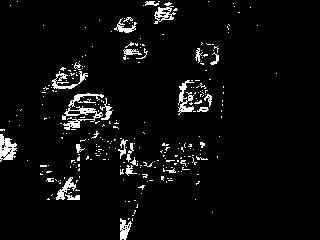}}
\caption{Modeled Background and detected Foregrounds using 10 \textbf{strongest} 
 eigenvectors.}
\label{fig:PCBG}
\end{figure}

Our goal is to prove that the eigenvectors corresponding to the largest eigenvalues of the scatter matrix $S$ are influenced by foreground frames more than background frames. We will use the assumption in Mixture of Gaussian (MoG) for background modeling of Stauffer and Grimson
\cite{Stauffer1999Adaptive}. In their approach each  “pixel process” is a time series of pixel values and these values are modeled by an MoG. In this method it is supposed that:
\begin{quotation}
``$\dots$ the variance of the moving object is expected to remain larger than a background pixel until the moving object stops.''
\end{quotation}

If $\sigma_b$ and $\sigma_f$ show the background and foreground variances, respectively, the above assumption means:
\begin{equation}\label{eq:sblsf}
\sigma_b \ll \sigma_f
\end{equation}

Suppose that  $X$ be  intensity values  of a particular pixel over times.

As a stochastic process, suppose $X = X_b \cup X_f $ is the combination of two stochastic processes $X_b$ and $X_f$ which are those elements that belong to background and foreground respectively. From the elementary statistics we have:
\begin{align}
   \mu_{X_b \cup X_f}   &= \frac{ N_b \mu_b + N_f \mu_f }{N_b + N_f} \label{eq:muX}\\
   \sigma_{X_b\cup X_f}^2 &= { \frac{N_b \sigma_b^2 + N_f \sigma_f^2}{N_b + N_f} + \frac{N_b N_f}{(N_b+N_f)^2}(\mu_b - \mu_f)^2 }
  \end{align}
where $N_b = |X_b|$ and $N_f = |X_f|$.

Based on assumption \eqref{eq:sblsf}, since $\sigma_b \ll \sigma_f$ the foreground pixels variance ($\sigma_f^2$)  has a stronger effect on the total variance.

Let $S_{n-1}=\sum_{k=1}^{n-1} (x_k-m)(x_k-m)^T$ be the scatter matrix, when $X=\{x_1,\dots,x_{n-1}\}$, $x_i\in\mathbb{R}^m$, $i\in\{1,\dots,n-1\}$. We want to show how adding a new instance to $X$ will influence the scatter matrix $S$. We look for changing the main eigenvector of $S$,  based on the new instance, which may or may not  belong to foreground.

According to the Welford algorithm \cite{Welford1962Note}, by adding a new instance $x_n$,  the new mean and scatter matrix can be computed as follows:
\[ \mu_n = \mu_{n-1}+ (x_n - \mu_{n-1})/n \]
\[S_n = S_{n-1} + (x_n - \mu_{n-1})*(x_n - \mu_n)\]

In  multidimensional spaces we have:
\[S_n = S_{n-1} + (x_n - \mu_{n-1})*(x_n - \mu_n)^T\]

With some algebraic manipulations the updated $S_n$ of Welford algorithm can be related only to $\mu_{n-1}$ as follows:
\begin{align}
S_n &= S_{n-1} + \frac{n-1}{n} (x_n - \mu_{n-1})*(x_n - \mu_{n-1})^T \nonumber \\
& = S_{n-1} + yy^T \nonumber
\end{align}
where
\begin{equation}\label{eq:yxn}
  y = \sqrt{\frac{n-1}{n}}(x_n - \mu_{n-1})
\end{equation}

and $yy^T$ is a Hermitian matrix.

\begin{figure}[t]
\centering
\subfigure[Frame No. 16]
{\label{fig:NonPCBG:Frame}
\includegraphics[width=.31\linewidth]{16.jpg}}
\subfigure[Background]
{\label{fig:NonPCBG:BG}
\includegraphics[width=.31\linewidth]{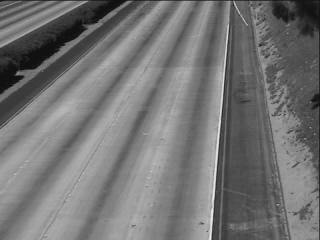}}
\subfigure[Foreground]
{\label{fig:NonPCBG:FG}
\includegraphics[width=.31\linewidth]{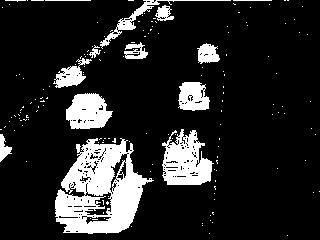}}
\caption{Modeled Background and detected Foregrounds using 10 \textbf{weakest} 
 eigenvectors.}
\label{fig:NonPCBG}
\end{figure}

Next, using the following theorems and lemmas,  we will show the foreground instances have more contribution to the main eigenvector of $S$. Please note that in these theorems, $S_{n-1}, S_{n}$ and $yy^T$ are demonstrated by $A, A'$ and $E$, respectively. 
Also note that we use the terms \textit{dominant eigenvectors}, \textit{the main eigenvectors} and \textit{the strongest eigenvectors} for \textit{the first principal components}. 

As shown in Figure \ref{fig:PC1and2}, suppose the eigenvectors are demonstrated in 2-dimensional space. A new arriving instance may change the dominant eigenvector. We claim  this eigenvector, will be more influenced if the new instance belongs to the foreground. We consider the angle between previous and current main eigenvectors, as a metric that how the new instance affects the main eigenvector. 
First, we will show the difference of the first eigenvectors of $A$ and $A'$ has an upper-bound. Then, we will prove $UB_b \ll UB_f$, in average,  where $UB_b$ is the upper bound of the change in the direction of the main eigenvector, when the new instance is a background instance, and $UB_f$, when it belongs to the foreground.

\begin{theorem}\label{th:normV_Vp}
Let $A \in \mathbb{C}^{n\times n}$ is a Hermitian matrix, $A'=A+yy^*$ is rank one updated of $A$ and $y \in \mathbb{C}^n$ is a column vector.
If $v$ and $v'$ are the normalized eigenvectors of $A$ and $A'$  corresponding to  their largest eigenvalues,  then:
\begin{equation}\label{eq:normV_Vp}
\norm{v-v'}\leq \beta\norm{E}
\end{equation}
where $E=yy^*$ is a small perturbation of $A$, and $\beta$ is a value unrelated to $E$.
\end{theorem}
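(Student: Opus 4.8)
The plan is to regard $E=yy^*$ as a small perturbation and to control how much it rotates the leading eigenvector, using elementary first‑order perturbation theory made quantitative. First I would record the spectral data of $A$: since $A$ is Hermitian, write its eigenvalues as $\lambda_1>\lambda_2\ge\cdots\ge\lambda_n$, with the strict first inequality encoding the (implicit but necessary) assumption that the dominant eigenvector is well defined, and let $v=v_1,v_2,\dots,v_n$ be a corresponding orthonormal eigenbasis. Put $\delta=\lambda_1-\lambda_2>0$ for the spectral gap. Because $E=yy^*$ is positive semidefinite, $\norm{E}=\|y\|^2$, and the variational characterization of eigenvalues gives that the largest eigenvalue $\lambda'$ of $A'=A+E$ satisfies $\lambda'\ge\lambda_1$; in particular $\lambda'-\lambda_j\ge\lambda_1-\lambda_2=\delta>0$ for every $j\ne1$.

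Next I would expand the unit eigenvector $v'$ in the basis $\{v_i\}$, writing $v'=\sum_{i=1}^n c_i v_i$ and fixing the free unit‑scalar phase of $v'$ so that $c_1\ge0$. Substituting into $(A+E)v'=\lambda'v'$ and taking inner products with $v_j$ for $j\ne1$ yields $(\lambda'-\lambda_j)c_j=v_j^*Ev'$, hence $c_j=v_j^*Ev'/(\lambda'-\lambda_j)$. Using $\lambda'-\lambda_j\ge\delta$, the fact that $\sum_{j\ne1}|v_j^*Ev'|^2\le\norm{Ev'}^2\le\norm{E}^2$ since $\{v_j\}$ is orthonormal and $\norm{v'}=1$, this gives the ``off‑axis mass'' bound
\[
\sum_{j\ne1}|c_j|^2\le\frac{\norm{E}^2}{\delta^2}.
\]

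Then I would convert this into the claimed estimate. Since $|c_1|^2=1-\sum_{j\ne1}|c_j|^2$ and $c_1\ge0$, one has $(1-c_1)^2\le 1-c_1^2=\sum_{j\ne1}|c_j|^2$, so
\[
\norm{v-v'}^2=(1-c_1)^2+\sum_{j\ne1}|c_j|^2\le 2\sum_{j\ne1}|c_j|^2\le\frac{2\norm{E}^2}{\delta^2},
\]
and therefore $\norm{v-v'}\le(\sqrt2/\delta)\,\norm{E}$, which is \eqref{eq:normV_Vp} with $\beta=\sqrt2/\delta$. (When $\norm{E}$ is too large for the phase choice $c_1\ge0$ to be meaningful, the trivial bound $\norm{v-v'}\le 2$ together with the slightly larger constant $\beta=2/\delta$ still does the job.) The crucial point is that $\beta$ depends only on the spectral gap of $A$ and not on $E$.

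I expect the main obstacle to be conceptual rather than computational: the inequality is vacuous unless $\lambda_1$ is a simple eigenvalue, since otherwise $v$ is not unique and no such $\beta$ can exist, and $\beta$ inevitably degenerates as $\delta\to0$, so ``a value unrelated to $E$'' must be read as ``allowed to depend on $A$.'' A second, purely bookkeeping, subtlety is that $v$ and $v'$ are each determined only up to a unit scalar, so \eqref{eq:normV_Vp} can hold only after the two phases are chosen compatibly --- exactly the role played by the normalization $c_1\ge0$. A reader who prefers a black box can obtain the same conclusion, again with $\beta$ proportional to $1/\delta$, directly from the Davis--Kahan $\sin\theta$ theorem applied to the top one‑dimensional invariant subspaces of $A$ and $A'$.
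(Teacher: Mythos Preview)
Your argument is correct and takes a genuinely different route from the paper's. The paper proceeds by a direct resolvent-style manipulation: from $Av=\lambda v$ and $(A+E)v'=\lambda'v'$ it writes $(A-\lambda I)(v'-v)=(\epsilon I-E)v'$ with $\epsilon=\lambda'-\lambda\in[0,\norm{E}]$, then inverts $A-\lambda I$ and takes operator norms to obtain $\beta=2\norm{(A-\lambda I)^{-1}}$. You instead expand $v'$ in the orthonormal eigenbasis of $A$ and bound each off-axis coefficient $c_j$ separately via $(\lambda'-\lambda_j)c_j=v_j^*Ev'$, which is essentially the Davis--Kahan computation carried out by hand. What your approach buys is rigor and transparency: the constant $\beta=\sqrt{2}/\delta$ is explicit in the spectral gap, the simple-eigenvalue hypothesis and the phase alignment $c_1\ge0$ are stated rather than left tacit, and you avoid the paper's step of inverting $A-\lambda I$, which is singular (its kernel contains $v$) and so must really be read as a pseudoinverse on $v^\perp$ --- at which point the paper's $\norm{(A-\lambda I)^{-1}}$ becomes $1/\delta$ and the two bounds agree up to a harmless factor. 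The paper's derivation is shorter on the page; yours is the one that actually closes without that caveat.
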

\begin{proof}
See Theorem \ref{app:th:normV_Vp}
in Appendix.
\end{proof}

The left-hand side of \ref{eq:normV_Vp} is the differences of the main eigenvectors of
 $A$ and $A+E$. It has the upper-bound of  $\beta\norm{E}$, where $E=yy^T$ and $y = \sqrt{\frac{n-1}{n}}(x_n - \mu_{n-1})$. Hence the left-hand side is related to the incoming instance $x_n$. We will show that the expected  value of the eigenvector's variations, is more related to the foreground instances than others.

The expected value of the right-hand side of \ref{eq:normV_Vp} is:
\begin{align}
\mathbb{E}[\beta\norm{E}] =
\beta\mathbb{E}[\norm{E}]
\end{align}

In the next theorem, the relation of this expected value and the incoming instance is shown.
For the sake of simplicity,  we drop the scalar coefficient $\sqrt{\frac{n-1}{n}}$ from $y$, then our perturbation matrix $E$ will be:
\[E = (x_n - \mu_{n-1})(x_n - \mu_{n-1})^T\]
\begin{lemma}\label{lem:Emub}
Let $N_b$, $\mu_b$ be the number and the mean of background instances  until time $n-1$; and $N_f$, $\mu_f$ be the mentioned parameters for the foreground's; and $N_b = N_f \approx (n-1)/2$.
If $E = (x_n - \mu_{n-1})(x_n - \mu_{n-1})^T$ and the new instance $x_n$ belong to Background ($x_n \in B$), then
\[\mathbb{E}[\norm{E}] \leq  \mathbb{E}[ \norm{x_n - \mu_b}^2] + \mathbb{E}[ \norm{x_n - \mu_f}^2] \]
\end{lemma}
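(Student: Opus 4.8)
The plan is to peel off the matrix structure first and reduce everything to an elementary inequality for the Euclidean norm. Since $E = yy^T$ with $y = x_n - \mu_{n-1}$ is a rank-one Hermitian matrix, its norm --- whether the spectral norm or the Frobenius norm, the two natural choices inherited from Theorem~\ref{th:normV_Vp} --- equals $\norm{y}^2 = \norm{x_n-\mu_{n-1}}^2$. Hence $\mathbb{E}[\norm{E}] = \mathbb{E}[\norm{x_n-\mu_{n-1}}^2]$, and the problem becomes purely one about how far the incoming sample $x_n$ lies from the running mean $\mu_{n-1}$.

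Next I would express $\mu_{n-1}$ through the class means. Applying the composition formula \eqref{eq:muX} to the $n-1$ samples seen so far, together with the balance hypothesis $N_b = N_f \approx (n-1)/2$, gives $\mu_{n-1} = \tfrac12(\mu_b+\mu_f)$, so that
\[ x_n - \mu_{n-1} = \tfrac12\big[(x_n-\mu_b) + (x_n-\mu_f)\big]. \]
Taking squared norms and using the standard inequality $\norm{a+b}^2 \le 2\norm{a}^2 + 2\norm{b}^2$ (which is the parallelogram identity after discarding the nonnegative term $\norm{a-b}^2$), with $a = x_n-\mu_b$ and $b = x_n-\mu_f$, yields
\[ \norm{x_n-\mu_{n-1}}^2 \le \tfrac12\big(\norm{x_n-\mu_b}^2 + \norm{x_n-\mu_f}^2\big). \]
Taking expectations and discarding the harmless factor $\tfrac12$ gives exactly the claimed bound; in fact the estimate holds with a factor of two to spare, and --- worth noting --- the hypothesis $x_n\in B$ plays no role in this direction, entering only in the later step where the two terms on the right are separately estimated via the MoG assumption \eqref{eq:sblsf}.

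I do not expect a genuine obstacle here; the only care needed is bookkeeping. One should state explicitly that $\norm{\cdot}$ evaluated on the rank-one perturbation equals the squared length of $y$, so the conclusion is insensitive to which of the usual matrix norms is meant, and one should treat $N_b = N_f \approx (n-1)/2$ as an exact equality when simplifying $\mu_{n-1}$, consistent with the level of rigor used elsewhere in Section~\ref{Sec:Math}. The substantive work --- connecting $\mathbb{E}[\norm{E}]$ to $\sigma_b$ and $\sigma_f$ and then deducing $UB_b \ll UB_f$ --- is what this lemma is a stepping stone toward, and is deferred to the estimates that follow it.
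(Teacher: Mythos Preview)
Your argument is correct and in fact slightly cleaner than the paper's. Both proofs begin identically: identify $\norm{E}$ with $\norm{x_n-\mu_{n-1}}^2$ via the rank-one structure, then use $N_b=N_f$ to write $x_n-\mu_{n-1}=\tfrac12\big[(x_n-\mu_b)+(x_n-\mu_f)\big]$. The divergence is in handling the cross term. You invoke the blanket inequality $\norm{a+b}^2\le 2\norm{a}^2+2\norm{b}^2$, which immediately gives the bound with a spare factor $\tfrac12$ and, as you note, never uses $x_n\in B$. The paper instead expands $\norm{A+B}^2$ explicitly, isolates the cross term $C_1=2(x_n-\mu_b)^T(x_n-\mu_f)$, and then uses the hypothesis $x_n\in B$ (so $\mathbb{E}[x_n]=\mu_b$) to argue that $\mathbb{E}[C_1]=0$. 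Your route is more robust---it avoids both the factor-tracking in the paper's expansion and the need to compute $\mathbb{E}[C_1]$---while the paper's route explains \emph{why} the hypothesis $x_n\in B$ appears in the statement at all: it is there to kill the cross term in expectation rather than to bound it crudely. Either way the lemma stands, and your observation that the membership hypothesis is inessential at this stage is a genuine sharpening.
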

\begin{proof}
We know that the spectral norm of a Hermitian matrix is the absolute value of its largest eigenvalue \cite{Stewart1973}. Also, we know that the only non-zero eigenvalue of $E=uv^T$ is $v^Tu$ (See Lemma \ref{app:lem:uvt}
 %A.4 
  in Appendix). $E$ is a symmetric matrix, according to
 Lemma %A.3  in Appendix
\ref{app:lem:matNorm}
  we have:
\begin{align}
\norm{E}_2 &= (x_n - \mu_{n-1})^T(x_n - \mu_{n-1})\nonumber
\end{align}
With rewriting $x_n - \mu_{n-1}$ we have:
\begin{align}
x_n - \mu_{n-1}\ &= x_n - \frac{N_b \mu_b+N_f \mu_f}{n-1}\nonumber
\end{align}
Since $N_b = N_f = (n-1)/2$, thus:
\begin{align}
x_n - \mu_{n-1}\ &= x_n - \frac{ \mu_b+ \mu_f}{2}\nonumber \\
& = \frac{(x_n - \mu_b) +  (x_n - \mu_f)}{2}\nonumber
\end{align}
Let $A=(x_n - \mu_b)/2$ and $B=(x_n - \mu_f)/2$.
 Then
\begin{align}
\norm{E}_2 &= (A+B)^T(A+B) \nonumber\\
&= \langle A+B,A+B\rangle\nonumber\\
&=\norm{A}^2+ \langle A,B\rangle+\langle B,A\rangle+\norm{B}^2\nonumber\\
&\leq\norm{A}^2+ 2|\langle A,B\rangle|+\norm{B}^2\nonumber\\
&= \norm{x_n - \mu_b}_2^2 +2 (x_n - \mu_b)^T(x_n - \mu_f)\nonumber\\
&+ \norm{x_n - \mu_f}_2^2\nonumber
\end{align}

Thus:
$\norm{E}_2 \leq \norm{x_n - \mu_b}_2^2 + \norm{x_n - \mu_f}_2^2 + C_1 = \textrm{UB}$
where
$C_1 = 2 (x_n - \mu_b)^T(x_n - \mu_f)$.
The expected value of upper bound $\textrm{UB}$ is given by
\begin{align}
\mathbb{E}[\textrm{UB}] &= \mathbb{E}[ \norm{x_n - \mu_b}_2^2 + \norm{x_n - \mu_f}_2^2 +C_1]\nonumber\\
&=  \mathbb{E}[ \norm{x_n - \mu_b}^2] + \mathbb{E}[ \norm{x_n - \mu_f}^2] + \mathbb{E}[C_1]\label{eq:ExpectedValue}
\end{align}
Since $x_n \in B$, hence $\mathbb{E}[x_n] = \mu_b$, then:
\begin{align}
\mathbb{E}[C_1] & = \mathbb{E}[2 (x_n - \mu_b)^T(x_n - \mu_f)]  \nonumber\\
&= 2\mathbb{E}[x_n^Tx_n - x_n^T\mu_b - x_n^T\mu_f+\mu_b^T\mu_f] \nonumber\\
&= 2( \mathbb{E}[x_n^Tx_n] - \mathbb{E}[x_n^T]\mu_b - \mathbb{E}[x_n^T]\mu_f +\mu_b^T\mu_f\nonumber )\\
&= 2(\mu_b^T\mu_b - \mu_b^T\mu_b - \mu_b^T\mu_f+\mu_b^T\mu_f) = 0 \nonumber
\end{align}
Hence, we have:
\[\mathbb{E}(\norm{E}) \leq  \mathbb{E}[ \norm{x_n - \mu_b}^2] + \mathbb{E}[ \norm{x_n - \mu_f}^2] \]
\end{proof}

For the sake of simplicity, temporarily, we drop $n$ from $x_n$, and $x$ denotes the incoming frame, which may be belong to background ($B$) or foreground ($F$).
 
\begin{lemma}\label{lem:sigmab}
Suppose that $x = [x^1,\dots,x^j,\dots,x^m]^T\in B$ 
 indicate an instance of background frames ($B$) over time.
If $(\sigma^j_b)^2$ is the variance of the $j^{th}$ component of ${x}$, then
$\mathbb{E}[ \norm{x - \mu_b}^2] =  \sum_{j=1}^m(\sigma_b^j)^2$.
\end{lemma}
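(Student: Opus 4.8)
The plan is to reduce the vector statement to a coordinatewise one using the definition of the Euclidean norm. First I would write $\mu_b = \mathbb{E}[x] = [\mu_b^1,\dots,\mu_b^m]^T$, so that its $j$-th component is $\mu_b^j = \mathbb{E}[x^j]$, and then expand
\[
\norm{x-\mu_b}^2 = \sum_{j=1}^m (x^j-\mu_b^j)^2 .
\]
This is just the definition of the $2$-norm applied to the vector $x-\mu_b$ whose $j$-th entry is $x^j-\mu_b^j$.

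Next I would take expectations of both sides and push $\mathbb{E}$ through the finite sum by linearity of expectation, giving
\[
\mathbb{E}\!\left[\norm{x-\mu_b}^2\right] = \sum_{j=1}^m \mathbb{E}\!\left[(x^j-\mu_b^j)^2\right].
\]
Then I would identify each summand: since $\mu_b^j=\mathbb{E}[x^j]$, the quantity $\mathbb{E}[(x^j-\mu_b^j)^2]$ is by definition the variance $(\sigma_b^j)^2$ of the $j$-th component of $x$ over the background frames. Substituting yields $\mathbb{E}[\norm{x-\mu_b}^2]=\sum_{j=1}^m(\sigma_b^j)^2$, which is the claim.

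I do not expect any genuine obstacle here; the statement is essentially the identity $\mathbb{E}\norm{x-\mathbb{E}x}^2 = \operatorname{tr}(\operatorname{Cov}(x)) = \sum_j \operatorname{Var}(x^j)$. The only point requiring a word of care is that the mean vector $\mu_b$ must be the componentwise mean, so that the cross terms vanish termwise rather than only in aggregate; once that is noted, the argument is a one-line application of linearity of expectation and the definition of variance.
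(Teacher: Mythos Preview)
Your proof is correct and follows essentially the same route as the paper: expand the squared norm coordinatewise, apply linearity of expectation, and identify each term $\mathbb{E}[(x^j-\mu_b^j)^2]$ as the variance $(\sigma_b^j)^2$. Your write-up is in fact slightly more careful than the paper's in making explicit that $\mu_b^j=\mathbb{E}[x^j]$ (and in noting the equivalent formulation $\mathbb{E}\norm{x-\mathbb{E}x}^2=\operatorname{tr}(\operatorname{Cov}(x))$), but the underlying argument is identical.
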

\begin{proof}
Assume that
$x,\mu_b  \in \mathbb{R}^m$, then:
\begin{align}
 \norm{x - \mu_b}^2 &= \sum_{j=1}^m (x^j - \mu_b^j)^2   \textrm{  follows that}\nonumber\\
\mathbb{E}[ \norm{x - \mu_b}^2] &= \mathbb{E}[ \sum_{j=1}^m (x^j - \mu_b^j)^2]\nonumber\\
&=   \sum_{j=1}^m\mathbb{E}[ (x^j - \mu_b^j)^2] =  \sum_{j=1}^m(\sigma_b^j)^2  \tag{$x \in B$}
\end{align}
\end{proof}

If $x\in F$ (Indicating a foreground instance), with the same induction of the previous Lemma, we will have:
\begin{align}
\mathbb{E}[ \norm{x - \mu_f}^2] =   \sum_{j=1}^m(\sigma_f^j)^2
\end{align}

According to the assumption \ref{eq:sblsf}, we have:
$$(\sigma_b^j )^2\ll (\sigma_f^j)^2, \quad \forall j$$
If the above summations $ \sum_{j=1}^m(\sigma_b^j)^2 , \sum_{j=1}^m(\sigma_f^j)^2 $ are demonstrated by $\Sigma_b^2, \Sigma_f^2$, we have:
\begin{equation} \label{eq:SbllSf}
\Sigma_b^2 \ll \Sigma_f^2
\end{equation}

Here we investigated the expected value of the upper bound  $\textrm{UB}$, where $x$ belongs to background or foreground.  At first, suppose $x\in B$, hence we have:
\[\mathbb{E}[x] = \mu_b\]

\begin{proposition}\label{prop:EsigmabC}
With notation as above,
if $x \in B$, then
\[\mathbb{E}[\norm{E}] \leq  \Sigma^2_b+C \]
where $C$ is a constant, unrelated to $x$.
\end{proposition}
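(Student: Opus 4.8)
The plan is simply to chain the two lemmas just proved. By Lemma~\ref{lem:Emub}, since $x\in B$,
\[
\mathbb{E}[\norm{E}] \leq \mathbb{E}[\norm{x-\mu_b}^2] + \mathbb{E}[\norm{x-\mu_f}^2],
\]
so it suffices to bound each of the two terms on the right. The first term is handled immediately by Lemma~\ref{lem:sigmab}: it equals $\sum_{j=1}^m(\sigma_b^j)^2 = \Sigma_b^2$.

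For the second term I would not invoke Lemma~\ref{lem:sigmab} directly, since that statement concerns the distance of an instance to the mean of its \emph{own} population, whereas here $x\in B$ but we measure its distance to $\mu_f$. Instead I would recentre by writing $x-\mu_f = (x-\mu_b) + (\mu_b-\mu_f)$ and expanding
\[
\norm{x-\mu_f}^2 = \norm{x-\mu_b}^2 + 2\,(x-\mu_b)^T(\mu_b-\mu_f) + \norm{\mu_b-\mu_f}^2 .
\]
Taking expectations and using that $x\in B$ gives $\mathbb{E}[x]=\mu_b$, so the cross term vanishes: $\mathbb{E}[(x-\mu_b)^T(\mu_b-\mu_f)] = (\mathbb{E}[x]-\mu_b)^T(\mu_b-\mu_f) = 0$ (this is the same cancellation already used for $\mathbb{E}[C_1]$ in Lemma~\ref{lem:Emub}). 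Hence $\mathbb{E}[\norm{x-\mu_f}^2] = \Sigma_b^2 + \norm{\mu_b-\mu_f}^2$.

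Adding the two estimates yields $\mathbb{E}[\norm{E}] \leq \Sigma_b^2 + (\Sigma_b^2 + \norm{\mu_b-\mu_f}^2)$, which is the claim with $C := \Sigma_b^2 + \norm{\mu_b-\mu_f}^2$. The one point deserving a sentence of care is the phrase \emph{constant unrelated to $x$}: the quantities $\mu_b$, $\mu_f$ and $\Sigma_b^2$ are fixed by the statistics accumulated up to time $n-1$ and do not depend on the particular incoming frame $x=x_n$, so the bound is of the required form and, in addition, is \emph{small}, since every summand is either $\Sigma_b^2$ or the fixed gap $\norm{\mu_b-\mu_f}^2$. This is exactly what makes the bound useful later: repeating the computation for $x\in F$ produces $\Sigma_f^2$ in place of $\Sigma_b^2$, and $\Sigma_b^2 \ll \Sigma_f^2$ by~\eqref{eq:SbllSf}. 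There is no real obstacle here — once Lemmas~\ref{lem:Emub} and~\ref{lem:sigmab} are available, the proposition is a two-line calculation plus the observation that the cross term has zero mean.
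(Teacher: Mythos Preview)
Your argument follows the paper's route exactly: invoke Lemma~\ref{lem:Emub}, identify the first summand as $\Sigma_b^2$ via Lemma~\ref{lem:sigmab}, and then evaluate $\mathbb{E}[\norm{x-\mu_f}^2]$ directly. The only difference is in how that last expectation is computed. The paper expands $\mathbb{E}[x^Tx-2x^T\mu_f+\mu_f^T\mu_f]$ and sets $\mathbb{E}[x^Tx]=\mu_b^T\mu_b$, obtaining $C=\norm{\mu_b-\mu_f}^2$; you instead recentre via $x-\mu_f=(x-\mu_b)+(\mu_b-\mu_f)$ and obtain $\mathbb{E}[\norm{x-\mu_f}^2]=\Sigma_b^2+\norm{\mu_b-\mu_f}^2$. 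Your computation is in fact the more careful one: in general $\mathbb{E}[x^Tx]=\norm{\mu_b}^2+\Sigma_b^2$, so the paper silently drops a $\Sigma_b^2$ term and your extra $\Sigma_b^2$ in $C$ is genuinely there.

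One small downstream consequence is worth flagging. The paper's (symmetric) $C=\norm{\mu_b-\mu_f}^2$ is reused verbatim in Remark~\ref{remark:SigmaF} and Theorem~\ref{th:angleV_Vp} as the \emph{same} constant for both the $x\in B$ and $x\in F$ bounds. With your (correct) value, the analogous constant in the foreground case becomes $\Sigma_f^2+\norm{\mu_b-\mu_f}^2$, so the two ``$C$'s'' differ. This does not harm the intended conclusion---the total bounds are $2\Sigma_b^2+\norm{\mu_b-\mu_f}^2$ versus $2\Sigma_f^2+\norm{\mu_b-\mu_f}^2$, and $\Sigma_b^2\ll\Sigma_f^2$ still gives the desired separation---but you should say so explicitly rather than calling $C$ a single constant common to both cases.
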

\begin{proof}
\begin{align}
\mathbb{E}(\norm{E}) &\leq  \mathbb{E}[ \norm{x - \mu_b}^2] + \mathbb{E}[ \norm{x - \mu_f}^2] \tag{According to Lemma \ref{lem:Emub}}\\
& = \Sigma^2_b + \mathbb{E}[ \norm{x - \mu_f}^2] \tag{According to Lemma \ref{lem:sigmab}}
\end{align}
In fact,
\begin{align}
\mathbb{E}[ \norm{x - \mu_f}^2] &= \mathbb{E}[x^Tx - 2x^T\mu_f+\mu_f^T\mu_f] \nonumber\\
&= \mathbb{E}[x^Tx] - 2\mathbb{E}[x^T]\mu_f+\mu_f^2\nonumber\\
&= \mu_b^T\mu_b - 2\mu_b^T\mu_f+\mu_f^T\mu_f = C\nonumber
\end{align}
Thus,
\[\mathbb{E}[\norm{E}] \leq \Sigma_b^2 + C\]
\end{proof}

\begin{remark}\label{remark:SigmaF}
Lemma \ref{lem:Emub}, \ref{lem:sigmab} and Proposition \ref{prop:EsigmabC} were proved
 when $x \in B$. With similar induction it is clear that these are true when $x \in F$ (the new instance belongs to the foreground). Hence these Lemmas and Proposition are true for every instance (whether belongs to either background or foreground). Note that for foreground instances $\Sigma^2_b$ should be replaced by $\Sigma^2_f$, i.e.:
 \[\mathbb{E}[\norm{E}] \leq \Sigma_f^2 + C\]
\end{remark}

\begin{theorem}\label{th:angleV_Vp}
Let $A \in \mathbb{C}^{n\times n}$ be a Hermitian matrix, $A'=A+yy^*$ be a rank one updated of $A$ and $y \in \mathbb{C}^n$ be column vector constructed by the arrived instance $x_n$ (eq. \ref{eq:yxn}).
Suppose $v$ and $v'$ be the normalized eigenvectors of $A$ and $A'$  corresponding to  their largest eigenvalues, and $\theta_b$($\theta_f$) be the angle between $v$ and $v'$, when the new instance belongs to the background(foreground).
If $\Sigma_b^2 \ll \Sigma_f^2$ then
$$ \mathbb{E}[\theta_b] \ll \mathbb{E}[\theta_f]$$
\end{theorem}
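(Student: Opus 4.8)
The plan is to reduce the statement about angles to the norm bound of Theorem~\ref{th:normV_Vp} and then feed in Proposition~\ref{prop:EsigmabC} (together with Remark~\ref{remark:SigmaF}) for the two cases $x_n\in B$ and $x_n\in F$. First I would express the angle in terms of the Euclidean distance between the two unit eigenvectors. Since eigenvectors are only defined up to sign, choose the sign of $v'$ so that $\langle v,v'\rangle\ge 0$; then $\theta\in[0,\pi/2]$ and $\norm{v-v'}^2 = 2-2\cos\theta = 4\sin^2(\theta/2)$, i.e. $\norm{v-v'} = 2\sin(\theta/2)$. Using Jordan's inequality $\sin u\ge \tfrac{2}{\pi}u$ on $[0,\pi/2]$ gives the pointwise bound $\theta\le \tfrac{\pi}{2}\norm{v-v'}$, valid for every realization of the incoming instance $x_n$.

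Next I would chain this with Theorem~\ref{th:normV_Vp}, which yields $\norm{v-v'}\le\beta\norm{E}$ with $E=yy^{*}$ and $\beta$ depending only on the fixed matrix $A=S_{n-1}$, not on the incoming instance. Combining, $\theta\le \tfrac{\pi}{2}\beta\norm{E}$ pointwise, so $\mathbb{E}[\theta]\le \tfrac{\pi}{2}\beta\,\mathbb{E}[\norm{E}]$. Because $\beta$ is the same whether $x_n$ is drawn from the background or the foreground, the two cases differ only through $\mathbb{E}[\norm{E}]$. Applying Proposition~\ref{prop:EsigmabC} when $x_n\in B$ and its foreground analogue from Remark~\ref{remark:SigmaF} when $x_n\in F$ then gives
\[
\mathbb{E}[\theta_b]\le \tfrac{\pi}{2}\beta\,(\Sigma_b^2+C),\qquad
\mathbb{E}[\theta_f]\le \tfrac{\pi}{2}\beta\,(\Sigma_f^2+C),
\]
with the same additive constant $C=\norm{\mu_b-\mu_f}^2$ in both. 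The hypothesis $\Sigma_b^2\ll\Sigma_f^2$ (equation~\eqref{eq:SbllSf}) then makes the first bound much smaller than the second, which is the asserted $\mathbb{E}[\theta_b]\ll\mathbb{E}[\theta_f]$.

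The delicate point — and the step I expect to require the most care, or the most honest hedging — is this last comparison. The two displayed estimates are only \emph{upper} bounds, so ``$\mathrm{UB}_b\ll \mathrm{UB}_f$'' does not by itself force $\mathbb{E}[\theta_b]\ll\mathbb{E}[\theta_f]$ unless one also produces a matching lower bound on $\mathbb{E}[\theta_f]$; in the spirit of the MoG-style reasoning used throughout this section, one argues that the expected perturbation is governed by its bound and therefore inherits the order-of-magnitude separation. A second subtlety is the shared constant $C=\norm{\mu_b-\mu_f}^2$: the bounds genuinely separate only when $C$ does not dominate $\Sigma_f^2$ (e.g. when the foreground instances are varied enough that $\mu_f$ stays close to $\mu_b$, or when one compares only the variance contributions), so I would either state this caveat explicitly or absorb $C$ into the ``background-like'' part of the estimate. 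Finally I would remark that Theorem~\ref{th:normV_Vp} is phrased for a \emph{small} perturbation $E$; for large foreground arrivals the bound $\theta\le\tfrac{\pi}{2}\beta\norm{E}$ may be loose (it is in any case trivially capped by $\theta\le\pi/2$), but this only reinforces, rather than weakens, the qualitative conclusion that a foreground instance displaces the dominant eigenvector far more than a background instance.
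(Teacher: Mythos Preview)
Your approach is essentially the paper's own: invoke Theorem~\ref{th:normV_Vp} to bound $\norm{v-v'}$ by $\beta\norm{E}$, take expectations, and plug in Proposition~\ref{prop:EsigmabC} and Remark~\ref{remark:SigmaF} for the two cases. The paper is in fact less careful than you: it simply writes $\mathbb{E}[\theta_b]=\mathbb{E}[\norm{v-v'}]$ without any angle-to-norm conversion, so your detour through $\norm{v-v'}=2\sin(\theta/2)$ and Jordan's inequality is an improvement, not a deviation.

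The caveats you flag are real, and they apply verbatim to the paper's proof. The paper concludes $\mathbb{E}[\theta_b]\ll\mathbb{E}[\theta_f]$ directly from $\beta(\Sigma_b^2+C)\ll\beta(\Sigma_f^2+C)$, with no lower bound on $\mathbb{E}[\theta_f]$ and no discussion of whether the shared additive constant $C$ could swamp the variance terms. So your ``delicate point'' is not a defect of your write-up relative to the paper; it is a genuine gap in the argument that the paper simply does not address. If you want to match the paper, you can drop the hedging; if you want to be honest, keep it and state the conclusion as a heuristic order-of-magnitude comparison rather than a theorem in the strict sense.
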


\begin{proof}
According to Theorem \ref{th:normV_Vp}, the upper-bound of $\norm{v-v'}$ is $\beta\norm{E}$ and according to Proposition \ref{prop:EsigmabC} and Remark \ref{remark:SigmaF}, the expected value of the upper bound is:
\begin{align}\label{eq:E_V_Vp}
 \mathbb{E}[\theta_b]= \mathbb{E}[\norm{v-v'}] \leq \beta(\Sigma^2_b + C) \quad \text{if } x_n\in B \\
  \mathbb{E}[\theta_f]=\mathbb{E}[\norm{v-v'}] \leq \beta(\Sigma^2_f + C) \quad \text{if } x_n\in F
\end{align}

For a specific signal $X=\{x_1,\dots,x_{n-1}\}$ the values $\beta,C$ are unrelated to incoming instance $x_n$; % to \eqref{eq:SbllSf},
since $\Sigma^2_b \ll \Sigma^2_f$, we have $ \mathbb{E}[\theta_b] \ll \mathbb{E}[\theta_f]$.
\end{proof}

\begin{corollary}\label{th:main}
Let $X=\{x_1,\dots,x_{n-1}\}$
be the previous instances of the signal, and $x_n\in B\cup F$ is the last incoming instance; and $B, F$ denote the Background or the Foreground sets.
Suppose that the variances of background instances be less than foregrounds.
Then it is expected that the first principal component of $X$ to
be more affected when $x_n\in F$, than when $x_n\in B$.
\end{corollary}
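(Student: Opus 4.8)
The plan is to obtain the corollary as a repackaging of Theorem~\ref{th:angleV_Vp}, since the substantive estimates have already been established. First I would fix the past signal $X=\{x_1,\dots,x_{n-1}\}$ and recall the identifications made earlier in this section: by the variational characterization of PCA, ``the first principal component of $X$'' is precisely the normalized eigenvector of the scatter matrix $S_{n-1}$ associated with its largest eigenvalue --- the object we have been calling the dominant / main / strongest eigenvector and denoting $v$. Appending the new observation $x_n$ turns $S_{n-1}$ into $S_n$, whose dominant eigenvector we call $v'$, and the agreed-upon measure of ``how much the first principal component is affected'' is the angle $\theta$ between $v$ and $v'$.

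Next I would invoke the Welford update derived above, $S_n = S_{n-1} + yy^{T}$ with $y=\sqrt{(n-1)/n}\,(x_n-\mu_{n-1})$, which exhibits the transition from $X$ to $X\cup\{x_n\}$ as exactly a rank-one Hermitian perturbation $E=yy^{*}$ of $A:=S_{n-1}$, giving $A'=S_n=A+E$; this is verbatim the hypothesis of Theorems~\ref{th:normV_Vp} and~\ref{th:angleV_Vp}. I would then note that the standing hypothesis of the corollary --- that the variances of background instances are less than those of the foreground --- is, after the componentwise summation carried out in Lemma~\ref{lem:sigmab}, precisely the inequality $\Sigma_b^2 \ll \Sigma_f^2$ recorded in~\eqref{eq:SbllSf}, which itself descends from the Stauffer--Grimson assumption~\eqref{eq:sblsf}.

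With these identifications in place the conclusion is immediate: Theorem~\ref{th:angleV_Vp} gives $\mathbb{E}[\theta_b]\ll\mathbb{E}[\theta_f]$ under $\Sigma_b^2 \ll \Sigma_f^2$, where $\theta_b$ (resp.\ $\theta_f$) is the angle between $v$ and $v'$ when $x_n\in B$ (resp.\ $x_n\in F$). Hence the expected angular displacement of the strongest eigenvector --- i.e.\ of the first principal component --- is far larger when the incoming frame is foreground than when it is background, which is exactly the assertion. If a fully explicit chain is wanted, one couples the two displayed inequalities from the proof of Theorem~\ref{th:angleV_Vp} with the elementary fact that $\norm{v-v'}=2\sin(\theta/2)$ is monotone in $\theta$, so the norm bound of Theorem~\ref{th:normV_Vp} transfers to the angular statement.

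I do not expect a genuine obstacle; the only care needed is in the wording. The bounds available are on \emph{expected} (upper bounds of) $\norm{v-v'}$, not on $\theta$ for a single deterministic frame, so the statement must stay at the level of ``it is expected that'', matching the probabilistic MoG setup. A second point worth one sentence is the balanced-count hypothesis $N_b=N_f\approx(n-1)/2$ inherited from Lemma~\ref{lem:Emub}: I would remark that only the \emph{relative} sizes $\Sigma_b^2\ll\Sigma_f^2$ drive the comparison --- the constant $C$ is independent of $x_n$ and cancels between the two cases --- so the qualitative claim is robust to a moderate imbalance between $N_b$ and $N_f$.
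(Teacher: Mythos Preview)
Your proposal is correct and follows essentially the same route as the paper: identify the first principal component with the dominant eigenvector of $S_{n-1}$, recognize the Welford update as a rank-one Hermitian perturbation, and invoke Theorem~\ref{th:angleV_Vp} under the hypothesis $\Sigma_b^2\ll\Sigma_f^2$ to conclude $\mathbb{E}[\theta_b]\ll\mathbb{E}[\theta_f]$. Your write-up is simply more explicit about the monotone link $\norm{v-v'}=2\sin(\theta/2)$ and about the inherited balanced-count assumption from Lemma~\ref{lem:Emub}, but these are elaborations rather than a different argument.
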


\begin{proof}
The first principal component of $X$ is the normalized eigenvector $v$ corresponding to the largest eigenvalue of scatter matrix $A=S_{n-1}=\sum_{k=1}^{n-1} (x_k-\mu)(x_k-\mu)^T$.
When a  new instance $x_n$ arrives, the new eigenvector $v'$ is about the same for  the new scatter matrix $A'=S_n$. 
$A$ is a Hermitian matrix and $A'$ is a rank one updated of $A$.
So, if $\theta_b$ be the angle between $v$ and $v'$ when $x_n\in B$ and $\theta_f$ when $x_n\in F$, then
according to Theorem \ref{th:angleV_Vp}, the proof is straightforward ($ \mathbb{E}[\theta_b] \ll \mathbb{E}[\theta_f]$).
\end{proof}

\section{Experimental Results} \label{Sec:Exp}

In this section some further experiments are demonstrated to support the discussions in sections \ref{sec:review} and  \ref{Sec:Math}. 
First, the result of  corollary \ref{th:main} is verified in Subsection \ref{sec:change_main_eigenvector}.
In Subsection \ref{sec:RecError} we discuss the performance of strongest and weakest eigenvectors in image reconstruction vs. background modeling and show the results for two video data. 
In Subsection \ref{sec:subspace} we show the subspace spanned by the strongest and the weakest eigenvectors for 3 different videos, and we also show the weakest eigenvectors can well represent the background subspace. Lastly, we present an experiment in Subsection \ref{sec:various_sizes} and discuss about the effect of the size of foreground objects in the performance of background models created by the strongest and the weakest eigenvectors. 

\subsection{Changes in the main eigenvector}\label{sec:change_main_eigenvector}

\begin{figure}%[t]
\centering
\includegraphics[width=.9\linewidth]{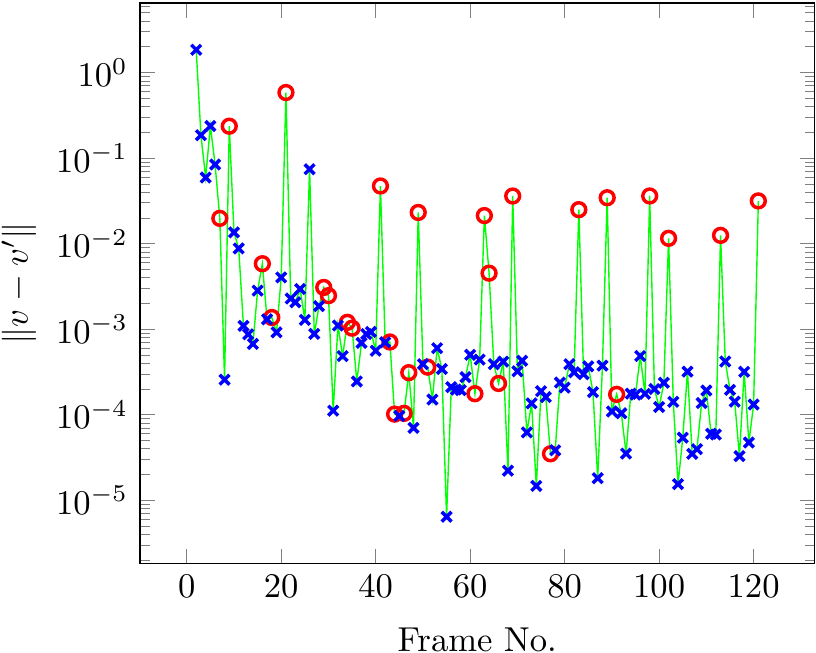}
\caption{
Changes in the main eigenvector (i.e. the eigenvector belonging to the largest eigenvalue)  for every new frame added by calculating the norm of the difference between the old vector $v$ and the new vector $v'$. The blue crosses demonstrate background frames and the red circles are for foreground frames. This shows that foreground frames are more influential than background frames in changing the main eigenvector.}
\label{fig:Snorms}
\end{figure}

In Figure \ref{fig:Snorms} we calculated and showed the norm differences of the main eigenvectors for every new frame added ($\norm{v-v'}$).
As can be seen in this figure, the main eigenvector is more influenced by foreground frames than background ones. This evidence supports Theorems \ref{th:main} and \ref{th:angleV_Vp}. In average, eigenvector's variations due to frames with foreground objects are greater than those related to backgrounds. Frames and their orders are as same as figure \ref{fig:PC1and2}.

\subsection{Image Reconstruction vs. Background Modeling}\label{sec:RecError}

The strongest eigenvectors are typically used in data compression and visualization 
\cite[Chap 14.]{HastieEtAl2008}, \cite[Chap 12.]{Bishop2006}. 

 Image reconstruction by the strongest eigenvectors is a form of data compression, which is used in eigenfaces and eigenbackgrounds. Here we compare the effect of using the strongest and the weakest eigenvectors for image reconstruction and compute both the reconstruction errors and background modeling errors.
 This has been shown in the top rows of figure \ref{fig:RMSE_FGs}. In this figure, the data from the Highway video shown in figure \ref{fig:SampleVideo} is used for the experiment. We first built a \textit{Base-Model (BM)} using selected number of eigenvectors (i.e. 1\textsuperscript{st}, 1\textsuperscript{st} to 7\textsuperscript{th}, 1\textsuperscript{st} to 30\textsuperscript{th}, 24\textsuperscript{th} to 30\textsuperscript{th}, and 30\textsuperscript{th} only as shown in figure \ref{fig:RMSE_FGs} horizontal axis), and then reconstruct all image frames in the video using this BM  %
 \footnote{BM is a user-friendly name of $\Phi_M$, mentioned in page 1.}

To segment the foreground objects from the background in a new image $I$, it is projected into the subspace $BM$: $I_{proj} = BM^T\times(I-\mu)$ and the reconstructed image $I_{rec}$ is considered as the estimated background ($\widehat{BG}$):
$\widehat{BG} \equiv I_{rec} = BM\times I_{proj}+\mu$. The difference between $I$ and $I_{rec}$ is calculated and foreground pixels are classified by a threshold \cite{Hughes2013}.

 The Root Mean Square Error (RMSE) of this image reconstruction is shown in the first row of this figure, where each line belongs to one frame (total of 30 frames). In this figure, we have highlighted the frames that have almost no foreground objects in blue color. As can be seen in this figure, the reconstruction error decreases by increasing the number of eigenvectors used in the BM. Also, we can see the weakest eigenvector has poor performance in image reconstruction.

Now let's study how the BM developed with various eigenvectors can be used as the background model of this video frame. For this, we first manually built a \textit{Ground-Truth (GT)} background model by averaging the video frames that do not contain any foreground objects (i.e. frames highlighted by blue lines). Then for each frame in the video, we calculate the RMSE between the reconstructed image (generated by applying each frame into the BM). The RMSE results are shown in the second row of figure \ref{fig:RMSE_FGs}. As can be seen in this figure, the BMs created by the weakest eigenvectors have consistently low RMSE for all video frames no matter if the frame contains any foreground objects or not. On the other hand, the BM created by the strongest eigenvector is very sensitive to the input image. It has good performance (low RMSE) if the image has no foreground object and poor performance when the image contains large foreground object. As shown in this figure, the video frame with a very large foreground object has made the worst performance while the RMSE is low for the frame with a very small foreground object.

\begin{figure}[t]
\centering
\includegraphics[width=1\columnwidth]{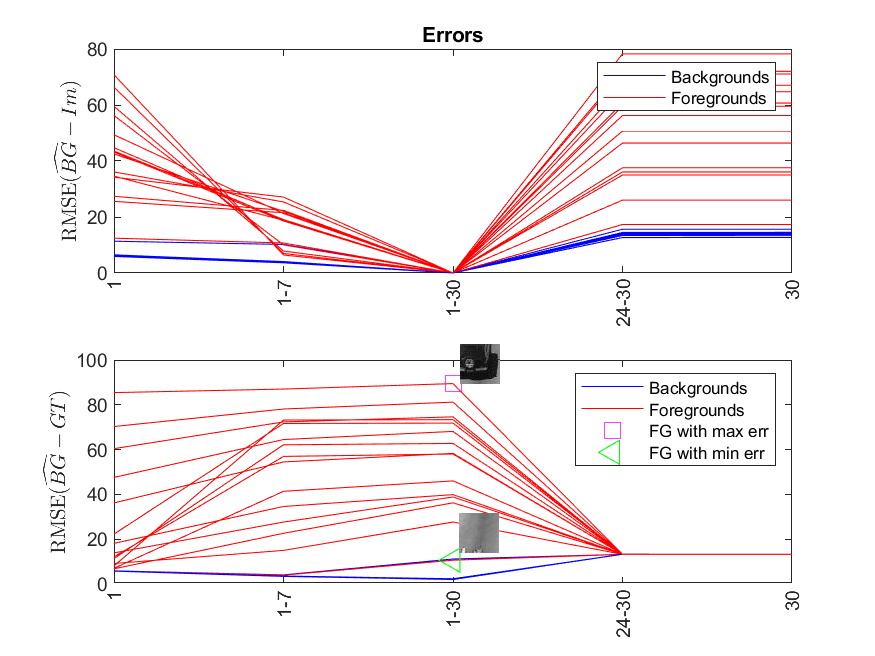}
\caption{
Results of the Based-Model (BM) development on \textit{Highway} video. The reconstructed image by selected eigenvectors is the estimated background ($\widehat{BG} \equiv I_{rec}$). First row: Reconstruction Error, RMSE of 
%\texttt{ReconstructedImage-ImageFrame} 
$\widehat{BG}-I$.
Second row: RMSE of 
%\texttt{ReconstructedImage-GT} 
$\widehat{BG}-GT$ (GT: Ground Truth background model).
Each  frame as an image is reconstructed on eigenspace on various combinations of eigenvectors; 5 subsets are reported in this figure: \{1\}, \{1,\dots,7\}, \{All 30 eigenvectors\}, \{last 7 eigenvectors\} and the last eigenvector (\{30\}) . The result for an image with foreground objects with minimum average error is highlighted by a green triangle, and the one with highest error is highlighted by a purple square. As can be seen from the first row, we can achieve zero reconstruction error by selecting all eigenvectors.  On the other hand, although the weakest eigenvectors have large reconstruction error, but they have consistently low error in background modeling in compared with the ground truth background (second row).
}
\label{fig:RMSE_FGs}
\end{figure}

Figure \ref{fig:7evs} shows the results of foreground detection using these BMs on four frames from the video (frames 5, 9, 15, and 17). As can be seen in this figure, the reconstructed image ($\widehat{BG} \equiv I_{rec}$) generated by the top 7 eigenvectors are very close to the actual frames, while the results from the top 7 weakest ones are good for background modeling, and hence they have good performance in foreground detection (\textit{FG}) too.   

We conducted similar experiment on an indoor video data (\textit{ShoppingMall} video).
 The results are shown in figure \ref{fig:ShoppingMall:RMSE_BGs}. Here we first did the same experiment as in figure \ref{fig:RMSE_FGs} on the first 30 frames of the video (see the two diagrams in figure \ref{fig:ShoppingMall:RMSE_BGs:a}) and then we used the 30 eigenvectors of these 30 frames to evaluate
 200 frames of the video (the two diagrams of figure \ref{fig:ShoppingMall:RMSE_BGs:b}). Again, we can see the weakest eigenvector has consistently lower RMSE in background modeling compared to those created by strongest eigenvectors.

\begin{figure}[t]
\centering
\includegraphics[width=0.8\linewidth]{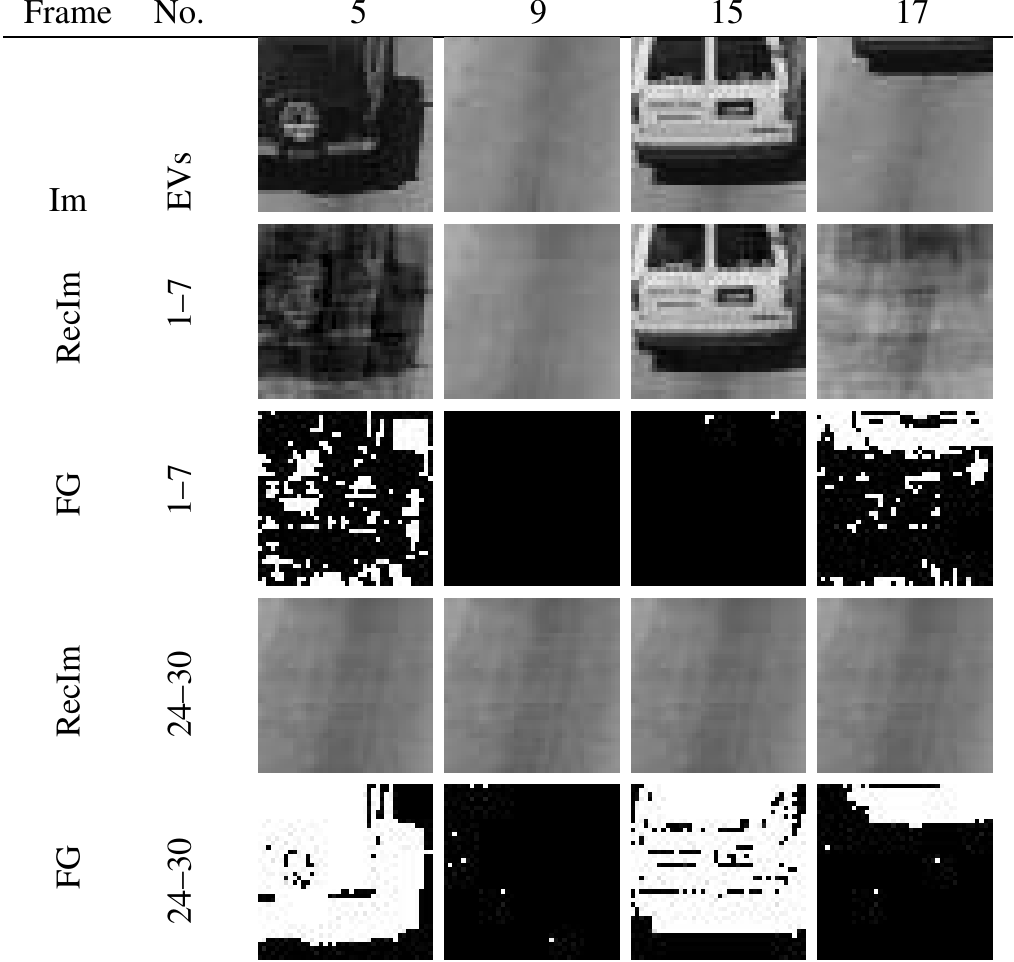}
\caption{The results of foreground detection for background models developed using i) Top 7 eigenvectors (2\textsuperscript{nd}  and 3\textsuperscript{rd}  rows), and ii) last 7 eigenvectors (4\textsuperscript{th}  and 5\textsuperscript{th}  rows) for the experiment of figure \ref{fig:RMSE_FGs}. Clearly, the weak eigenvectors have better background modeling.}
\label{fig:7evs}
\end{figure}

\begin{figure}%[t]
\centering
\subfigure[] %BM created by the first 30 frames]
{\label{fig:ShoppingMall:RMSE_BGs:a}
\includegraphics[width=1\columnwidth]{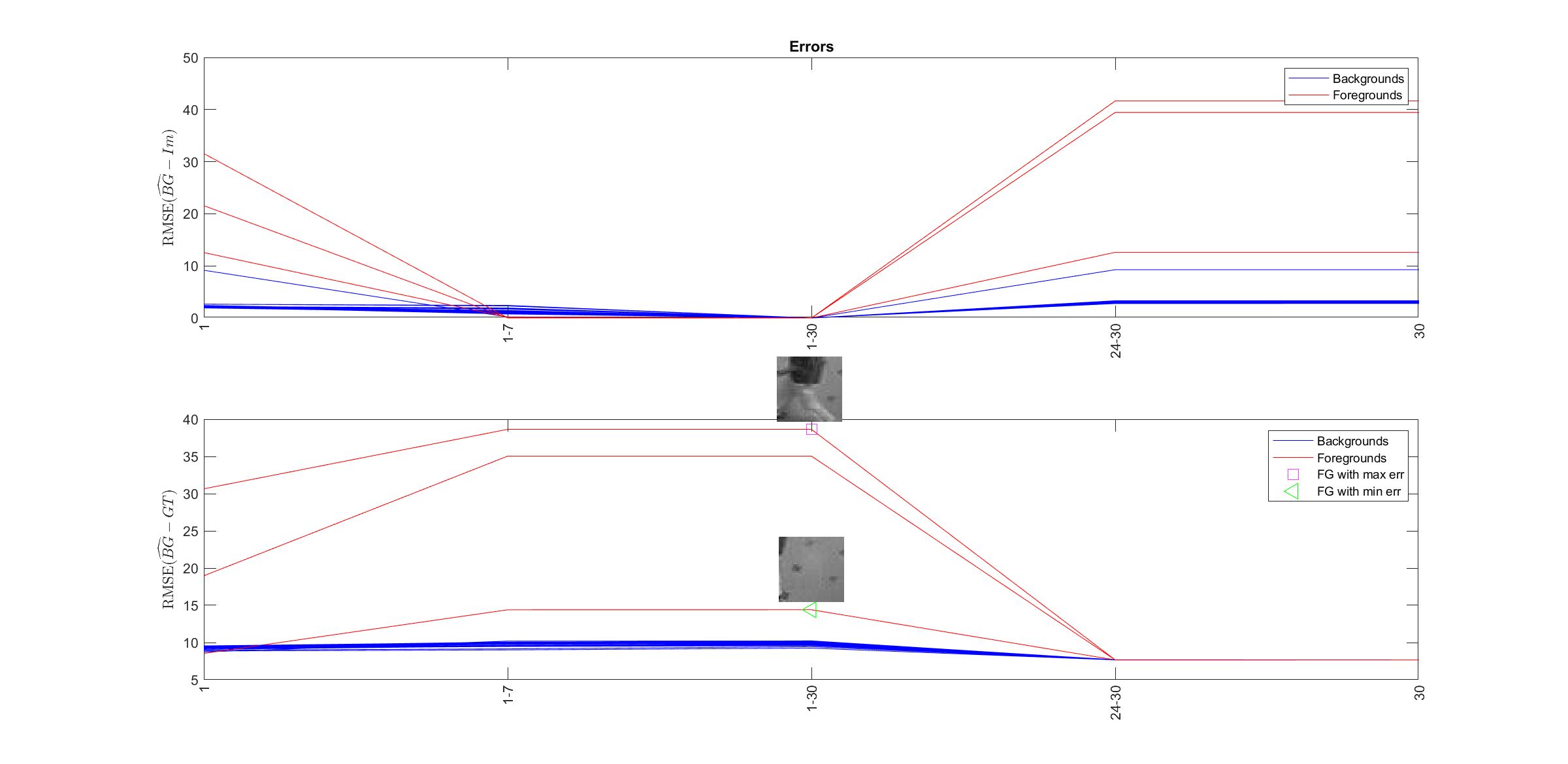}}
\\
\subfigure[] %BM created by the first 200 frames]
{\label{fig:ShoppingMall:RMSE_BGs:b}
\includegraphics[width=1\columnwidth]{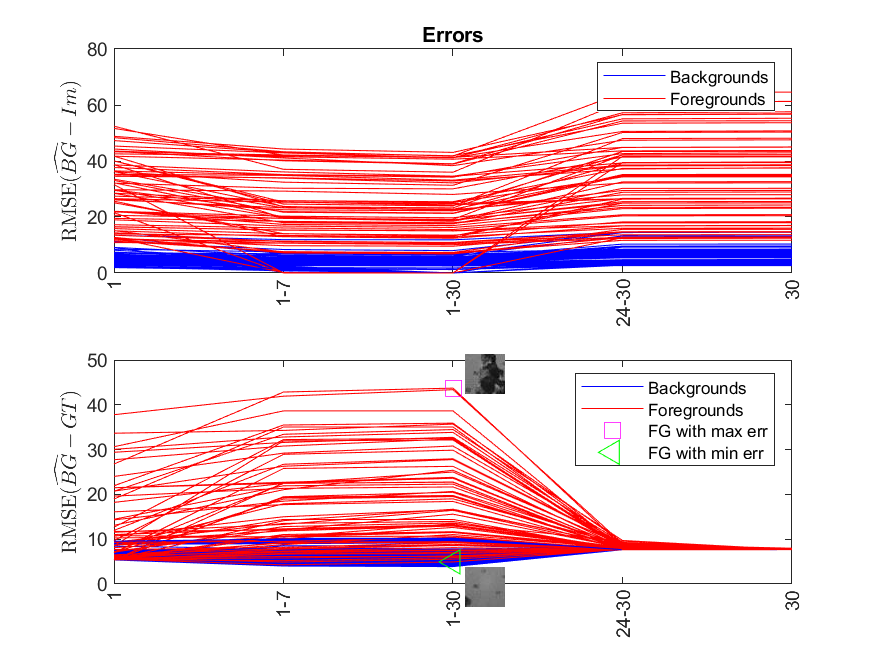}}
\caption{
Same experiment as figure \ref{fig:RMSE_FGs} but on \textit{ShoppingMall} video. (a) Results using the BM created with the first 30 frames in the video. (b) Results of the BM created in (a) on 200 frames of the video. 
}

\label{fig:ShoppingMall:RMSE_BGs}
\end{figure}

\subsection{Spanned subspaces}\label{sec:subspace}

As another experiment, for the same block shown in Figure \ref{fig:SampleVideo}, we computed all 121 eigenvectors (121 frames resulted in 121 eigenvectors). Then for every frame, the projected image on the subspace of the first two  eigenvectors is demonstrated in Figure \ref{fig:Frames:Highway}.
This figure shows how image frames are distributed in the 2D space defined by
two consequence eigenvectors.
As before, the red circles demonstrate foreground frames and blue crosses show background frames.
Subfigures (a)--(e) show the subspaces corresponding to the following eigenvectors' pairs:
(1,2),
(25, 26),  (49, 50), (73, 74)
and (97, 98), respectively.

%\clearpage

In figure \ref{fig:Frames:Highway}(a), the left panel shows the first and second eigenvectors. As can be seen, the blue crosses that belong to background frames are mapped very closely while the red circles, demonstrated the foreground frames, are  distributed in this 2D space. This shows this subspace is indeed suitable for foreground objects and not for background regions.  For better visualization, the right panel shows some  uniformly selected frames from the left panel points. If we divide the whole space into a 5$\times$5 hypothetical grid, these 25 images are the closest projected images to the vertices of this grid. The corresponding points are marked by a green plus sign in the left panel (behind red and blue markers).

Similarly, we created a 2D space defined by some other eigenvectors 
and again mapped all frames into this 2D space as explained above. The result is shown in Figure \ref{fig:Frames:Highway}(b)--(e). Figure (e) shows less significant components (97 and 98). In contrast to Figure \ref{fig:Frames:Highway}(a), in the 2D space defined by these non-significant eigenvectors, all foreground frames are mapped very closely to each other while background frames are widely distributed.
These figures show most significant eigenvectors are more suitable to model and analysis of different foreground objects while the space defined by the least significant eigenvectors is more proper for modeling the background.

Figures \ref{fig:Frames:ShoppingMall} and \ref{fig:Frames:Traffic} show the aforementioned experiments on two other videos: \textit{ShoppingMall} and \textit{Traffic}, where both of them have more crowded scenes than \textit{Highway}. As the previous results in figure  \ref{fig:Frames:Highway}, the most important eigenvectors, illustrate the foreground space, and by
increasing the  component numbers -- or least significant eigenvectors -- the background frames (blue markers) are more distributed. In the last two videos, even in the least significant components, red markers (or foreground frames) are also spread out over the whole space. The reason is that these videos are taken from a very crowded place, and almost in all frames, a moving object was present in every block.

As can be seen,  in all three figures, the subspace spanned by the weakest eigenvectors  -- subfigures (a) -- the background instances (blue crosses) are compacted and not spread well. By increasing the principal component number, background instances are spread and  foreground ones are compressed.

%%%%%%%%%%%%%%%%%%%%%%%%%%%%%%%%%%%%
%\insertPDFs
%%%%%%%%%%%%%%%%%%%%%%%%%%%%%%%%%%%%%

\begin{figure}
\centering
\includegraphics{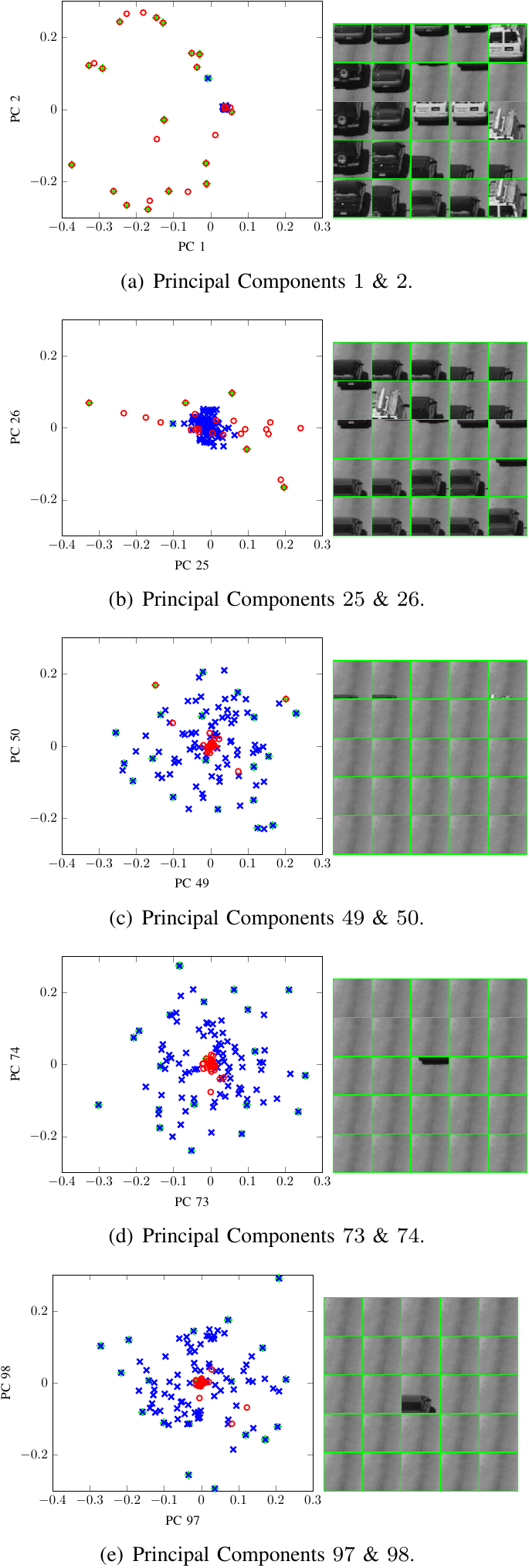}
\caption{Some demonstration for the effect of different principal components subspaces for video: \textit{\textbf{Highway}}.  Left panel: the principal components' subspace of the  video. The blue crosses/red circles, show background/foreground images. The green plus points are the closest projected images to the vertices of a unified grid, defined by the marginal of the principal components. Right panel: The images correspond to the green plus points. Foreground images are well distributed in subspaces related to the strongest eigenvectors (first eigenvectors); in contrast background frames are well propagated in subspaces corresponding to the weakest eigenvectors. }
\label{fig:Frames:Highway}
\end{figure}

\begin{figure}%[t]
\centering
\includegraphics{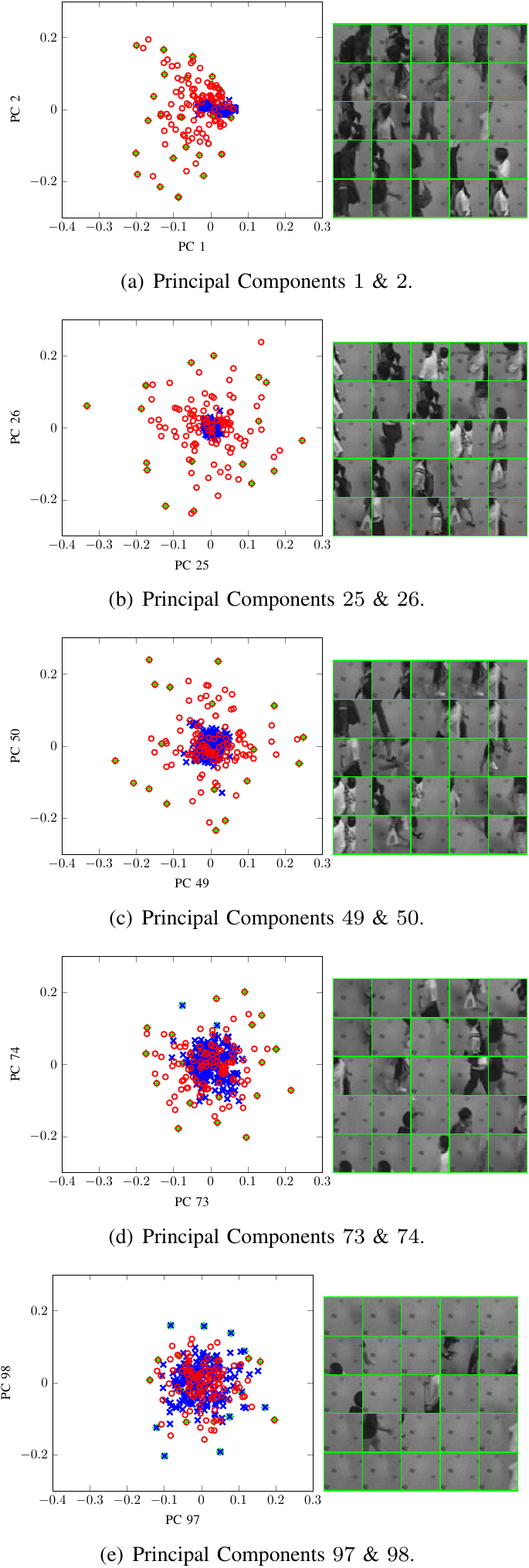}
\caption{Some demonstration for the effect of different principal components subspaces for video: \textit{\textbf{ShoppingMall}}.  Left panel: the principal components' subspace of the  video. The blue crosses/red circles, show background/foreground images. The green plus points are the closest projected images to the vertices of a unified grid, defined by the marginal of the principal components. Right panel: The images correspond to the green plus points. Background frames are getting propagated as we move from the strongest eigenvector subspace towards the subspaces corresponding to the weakest eigenvectors. }
\label{fig:Frames:ShoppingMall}
\end{figure}

\begin{figure}
\centering
\includegraphics{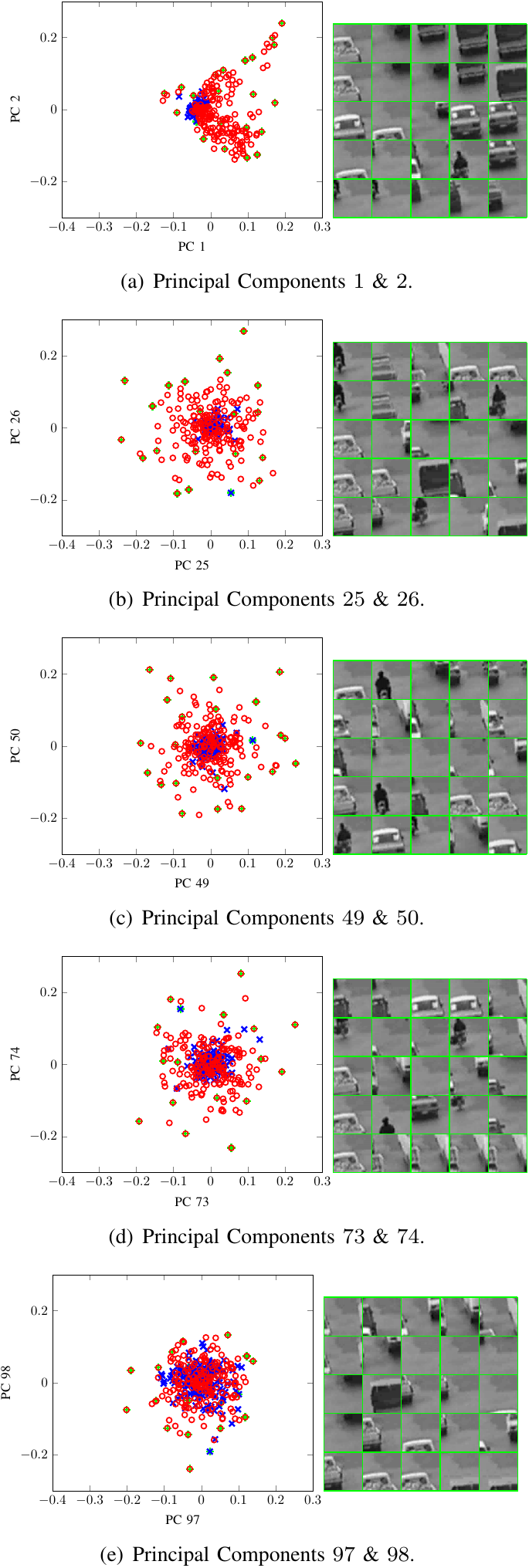}
\caption{Some demonstration for the effect of different principal components subspaces for video: \textit{\textbf{Traffic}}.  Left panel: the principal components' subspace of the  video. The blue crosses/red circles, show background/foreground images. The green plus points are the closest projected images to the vertices of a unified grid, defined by the marginal of the principal components. Right panel: The images correspond to the green plus points. Again, background frames are getting propagated as we move from the strongest eigenvector subspace towards the subspaces corresponding to the weakest eigenvectors. }
\label{fig:Frames:Traffic}
\end{figure}

\subsection{Effect of the Foreground Object's Size}\label{sec:various_sizes}

 In this section we study the size of a foreground object on the performance of the background model generated by the Eigenbackground algorithm, as well as the results of our proposed method on using the weakest eigenvectors.
%   the usage of the strongest and the weakest eigenvectors for background modeling. In 
Here we use a window of \textit{ShoppingMall} video used in figure \ref{fig:Frames:ShoppingMall} with various sizes: $32\times32$, $64\times64$, $128\times128$ and $256\times256$ as shown in figure \ref{fig:ShoppingMall:32_64_128_256}. The results are demonstrated in figures \ref{fig:ShoppingMall:32} -- \ref{fig:ShoppingMall:256}. In these figures, we show the results of foreground detection without  any filtering or post-processing. It is clear from these figures that  i) the performance of the Eigenbackground method (that uses the strongest eigenvectors) is worsen by increasing the size of the foreground object, and even for the case of small size foreground object, still there is some ghosting effect in the generated background image,  and ii) our proposed method on using the weakest eigenvectors performs well and it is independent to the size of the foreground object.

\begin{figure}[t]
\centering
\includegraphics[width=.95\linewidth]{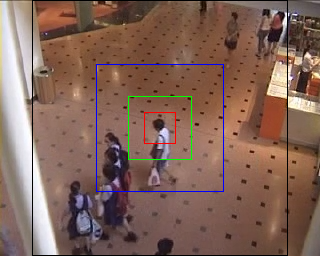}
\caption{  A test scenario to study the effect of foreground object's size. The background model to be created on various block size:  $32\times32, 64\times64, 128\times128$ and $256\times256$.
\label{fig:ShoppingMall:32_64_128_256} }

\end{figure}

%\insertVS

\begin{figure}[t]
\centering
\includegraphics[width=0.67\linewidth]{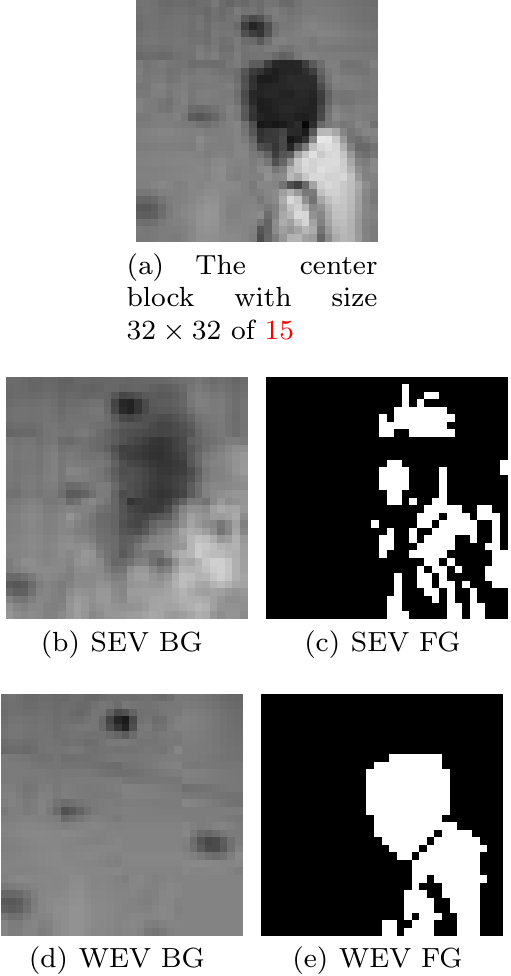}
\caption{Background model and detected Foreground using  10 Strongest Eigenvectors (SEV) and 10 Weakest Eigenvectors (WEV)  for the center block of figure  \ref{fig:ShoppingMall:32_64_128_256} with block size = 32.}
\label{fig:ShoppingMall:32}
\end{figure}

\begin{figure}[t]
\centering
\includegraphics[width=0.67\linewidth]{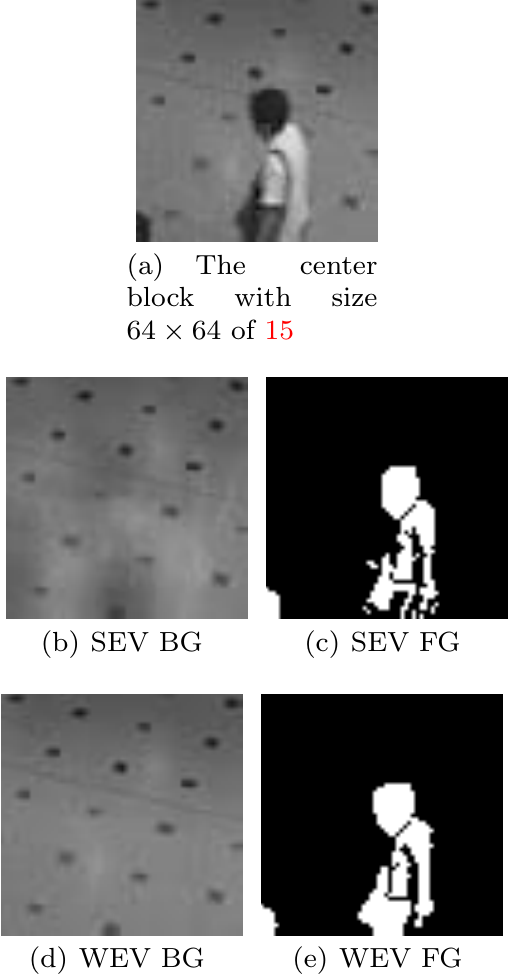}
\caption{Background model and detected Foreground using  10 Strongest Eigenvectors (SEV) and 10 Weakest Eigenvectors (WEV)  for the center block of figure  \ref{fig:ShoppingMall:32_64_128_256} with block size = 64.}
\label{fig:ShoppingMall:64}
\end{figure}

\begin{figure}[t]
\centering
\includegraphics[width=0.67\linewidth]{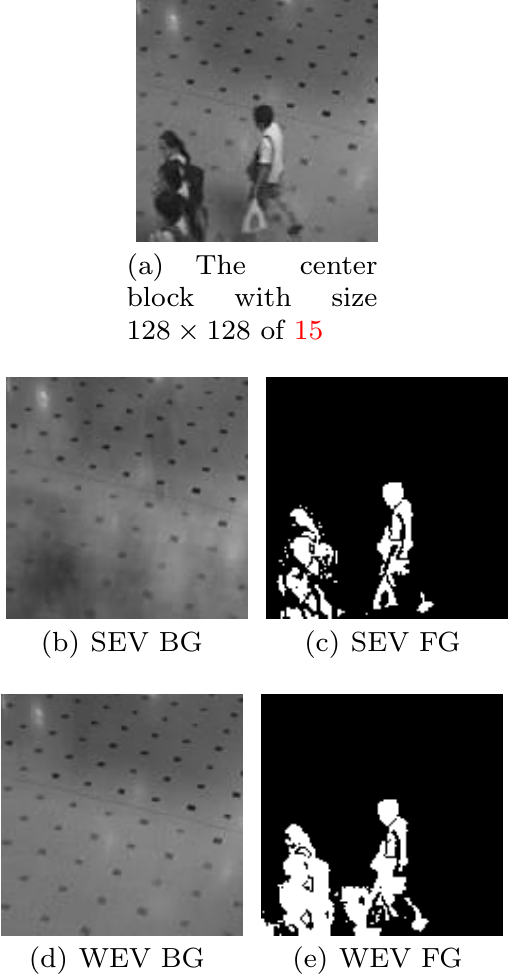}
\caption{Background model and detected Foreground using  10 Strongest Eigenvectors (SEV) and 10 Weakest Eigenvectors (WEV)  for the center block of figure  \ref{fig:ShoppingMall:32_64_128_256} with block size = 128.}
\label{fig:ShoppingMall:128}
\end{figure}

\begin{figure}[t]
\centering
\includegraphics[width=0.67\linewidth]{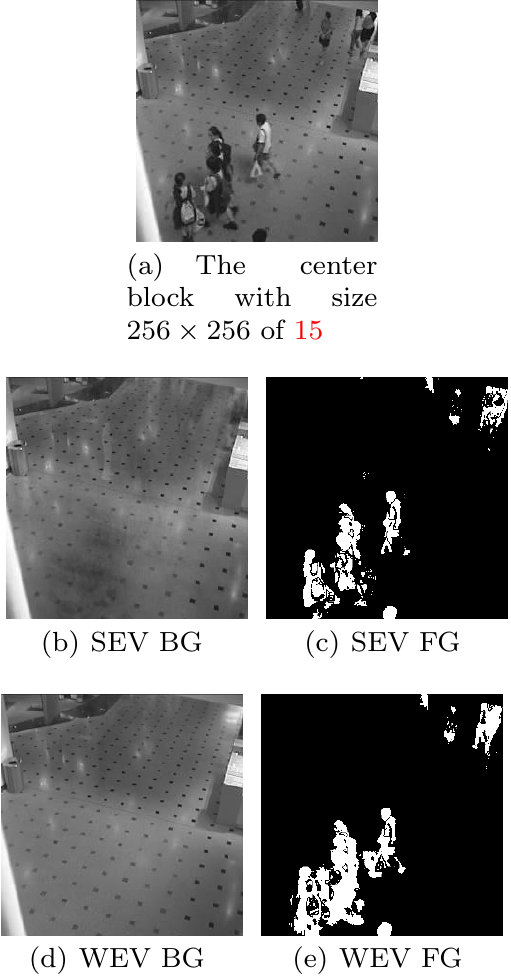}
\caption{Background model and detected Foreground using  10 Strongest Eigenvectors (SEV) and 10 Weakest Eigenvectors (WEV)  for the center block of figure  \ref{fig:ShoppingMall:32_64_128_256} with block size = 256.}
\label{fig:ShoppingMall:256}
\end{figure}

\section{Conclusion}\label{Sec:Conc}
Eigenbackground \cite{ Oliver2000PCA} is a well-known method for background modeling, %Because of its problems various modification of it was co
where the strongest (i.e. most significant) eigenvectors are selected for background modeling. %This manuscript was against to this idea.
In this paper,  this method was investigated and proved that the subspace produced by the strongest eigenvectors is affected by both foreground objects and background and hence a background model created in this way suffers from ghost effects. 
Theoretical aspects of this investigation were also confirmed by experimental results.
This finding explains
 various artifacts reported by researchers on using the Eigenbackground method.% \cite{Oliver2000PCA} was due to wrong idea of the paper.
%The least significant eigenvectors are

On the other hand, this study shows the weakest  ( i.e. least significant) eigenvectors, which are generally considered less important in computer vision and image processing applications (as they are considered to be highly affected by noise and other distortions in the input videos) can indeed play useful role in background modeling. This can be an alternative to the original Eigenbackground model to reduce the dependency of the background model to the foreground objects in the scene. 

To the best of our knowledge, this paper is the first that propose to use the weakest eigenvector for a scientific application, as traditionally,  this information is thrown away and treated as ``garbage data''.  This can open a new door for researchers in computer vision, pattern analysis, and machine learning to reconsider and study the weakest eigenvector for other potential applications  too.

\bibliographystyle{ieeetr}%{spbasic}      % basic style, author-year citations   % Added this, Farzam 31/05/2021
\bibliography{references,OliverCitations}

\begin{thebibliography}{10}

\bibitem{Oliver2000PCA}
N.~M. Oliver, B.~Rosario, and A.~P. Pentland, ``{A Bayesian computer vision
  system for modeling human interactions},'' {\em IEEE Transactions on Pattern
  Analysis and Machine Intelligence}, vol.~22, pp.~831--843, Aug. 2000.

\bibitem{citeulike:8163606}
F.~De~La~Torre and M.~J. Black, ``{Robust principal component analysis for
  computer vision},'' in {\em Computer Vision, 2001. ICCV 2001. Proceedings.
  Eighth IEEE International Conference on}, vol.~1, pp.~362--369 vol.1, IEEE,
  2001.

\bibitem{skocajIMAVIS08}
D.~Sko\v{c}aj and A.~Leonardis, ``Incremental and robust learning of subspace
  representations,'' {\em Image and Vision Computing}, vol.~26, no.~1,
  pp.~27--38, 2008.

\bibitem{Dickinson20091326}
P.~Dickinson, A.~Hunter, and K.~Appiah, ``A spatially distributed model for
  foreground segmentation,'' {\em Image and Vision Computing}, vol.~27, no.~9,
  pp.~1326--1335, 2009.

\bibitem{Yuan20092450}
Y.~Yuan, Y.~Pang, J.~Pan, and X.~Li, ``Scene segmentation based on ipca for
  visual surveillance,'' {\em Neurocomputing}, vol.~72, no.~10-12,
  pp.~2450--2454, 2009.
\newblock cited By 23.

\bibitem{Casares20101223}
M.~Casares, S.~Velipasalar, and A.~Pinto, ``Light-weight salient foreground
  detection for embedded smart cameras,'' {\em Computer Vision and Image
  Understanding}, vol.~114, no.~11, pp.~1223--1237, 2010.
\newblock cited By 20.

\bibitem{Dong201131}
Y.~Dong and G.~Desouza, ``Adaptive learning of multi-subspace for foreground
  detection under illumination changes,'' {\em Computer Vision and Image
  Understanding}, vol.~115, no.~1, pp.~31--49, 2011.
\newblock cited By 15.

\bibitem{Tzevanidis2011105}
K.~Tzevanidis and A.~Argyros, ``Unsupervised learning of background modeling
  parameters in multicamera systems,'' {\em Computer Vision and Image
  Understanding}, vol.~115, no.~1, pp.~105--116, 2011.
\newblock cited By 5.

\bibitem{guyon:hal-00811439}
C.~Guyon, T.~Bouwmans, and E.-H. Zahzah, ``{Robust Principal Component Analysis
  for Background Subtraction: Systematic Evaluation and Comparative
  Analysis},'' in {\em {Principal Component Analysis, Book 1}}, pp.~223--238,
  {INTECH}, Mar. 2012.

\bibitem{Vosters20121004}
L.~Vosters, C.~Shan, and T.~Gritti, ``Real-time robust background subtraction
  under rapidly changing illumination conditions,'' {\em Image and Vision
  Computing}, vol.~30, no.~12, pp.~1004--1015, 2012.

\bibitem{Krishna2012624}
M.~G. Krishna, V.~M. Aradhya, M.~Ravishankar, and D.~R. Babu, ``Lopp: Locality
  preserving projections for moving object detection,'' {\em Procedia
  Technology}, vol.~4, pp.~624 -- 628, 2012.
\newblock 2nd International Conference on Computer, Communication, Control and
  Information Technology( C3IT-2012) on February 25 - 26, 2012.

\bibitem{Zhao20121134}
Y.~Zhao, H.~Gong, Y.~Jia, and S.-C. Zhu, ``Background modeling by subspace
  learning on spatio-temporal patches,'' {\em Pattern Recognition Letters},
  vol.~33, no.~9, pp.~1134--1147, 2012.
\newblock cited By 9.

\bibitem{Yeo20131583}
B.~Yeo, W.~Lim, and H.~Lim, ``Scalable-width temporal edge detection for
  recursive background recovery in adaptive background modeling,'' {\em Applied
  Soft Computing Journal}, vol.~13, no.~4, pp.~1583--1591, 2013.

\bibitem{Seger20142098}
R.~Seger, M.~Wanderley, and A.~Koerich, ``Automatic detection of musicians'
  ancillary gestures based on video analysis,'' {\em Expert Systems with
  Applications}, vol.~41, no.~4 PART 2, pp.~2098--2106, 2014.

\bibitem{Spampinato201474}
C.~Spampinato, S.~Palazzo, and I.~Kavasidis, ``A texton-based kernel density
  estimation approach for background modeling under extreme conditions,'' {\em
  Computer Vision and Image Understanding}, vol.~122, pp.~74--83, 2014.

\bibitem{Bouwmans201431}
T.~Bouwmans, ``Traditional and recent approaches in background modeling for
  foreground detection: An overview,'' {\em Computer Science Review},
  vol.~11–12, pp.~31 -- 66, 2014.

\bibitem{Varadarajan20153488}
S.~Varadarajan, P.~Miller, and H.~Zhou, ``Region-based mixture of gaussians
  modelling for foreground detection in dynamic scenes,'' {\em Pattern
  Recognition}, vol.~48, no.~11, pp.~3488--3503, 2015.

\bibitem{DBLP:journals/corr/ShakeriZ15}
M.~Shakeri and H.~Zhang, ``{COROLA:} {A} sequential solution to moving object
  detection using low-rank approximation,'' {\em CoRR}, vol.~abs/1505.03566,
  2015.

\bibitem{Dou2015382}
J.~Dou, J.~Li, Q.~Qin, and Z.~Tu, ``Moving object detection based on
  incremental learning low rank representation and spatial constraint,'' {\em
  Neurocomputing}, vol.~168, pp.~382--400, 2015.

\bibitem{conf/pcm/XuSG06}
Z.~Xu, P.~Shi, and I.~Y.~H. Gu, ``An eigenbackground subtraction method using
  recursive error compensation.,'' in {\em PCM} (Y.~Zhuang, S.~Yang, Y.~Rui,
  and Q.~He, eds.), vol.~4261 of {\em Lecture Notes in Computer Science},
  pp.~779--787, Springer, 2006.

\bibitem{DBLP:journals/cviu/ChenTWH16}
W.~Chen, Y.~Tian, Y.~Wang, and T.~Huang, ``Fixed-point gaussian mixture model
  for analysis-friendly surveillance video coding,'' {\em Computer Vision and
  Image Understanding}, vol.~142, pp.~65--79, 2016.

\bibitem{Wan2018Total}
M.~Wan, G.~Gu, W.~Qian, K.~Ren, Q.~Chen, H.~Zhang, and X.~Maldague, ``Total
  variation regularization term-based low-rank and sparse matrix representation
  model for infrared moving target tracking,'' {\em Remote Sensing}, vol.~10,
  no.~4, p.~510, 2018.

\bibitem{Banu2020SC}
S.~Banu and N.~Maheswari, ``Background modelling using a q-tree based
  foreground segmentation,'' {\em Scalable Computing: Practice and Experience},
  vol.~21, pp.~17--31, 03 2020.

\bibitem{Djerida2020}
A.~{Djerida}, Z.~{Zhao}, and J.~{Zhao}, ``Background subtraction in dynamic
  scenes using the dynamic principal component analysis,'' {\em IET Image
  Processing}, vol.~14, no.~2, pp.~245--255, 2020.

\bibitem{Shah2017}
N.~Shah, A.~Píngale, V.~Patel, and N.~V. George, ``An adaptive background
  subtraction scheme for video surveillance systems,'' in {\em 2017 IEEE
  International Symposium on Signal Processing and Information Technology
  (ISSPIT)}, pp.~013--017, 2017.

\bibitem{Bouwmans2009SubspaceLF}
T.~Bouwmans, ``Subspace learning for background modeling: A survey,'' {\em
  Recent Patents on Computer Science}, vol.~2, pp.~223--234, 2009.

\bibitem{Cao2008}
X.-J. Cao, B.-C. Pan, S.-L. Zheng, and C.-Y. Zhang, ``Motion object detection
  method based on piecemeal principal component analysis of dynamic background
  updating,'' in {\em 2008 International Conference on Machine Learning and
  Cybernetics}, vol.~5, pp.~2932--2937, 2008.

\bibitem{Ziubi_ski_2014}
P.~Ziubi{\'{n}}ski, P.~Garbat, and J.~Zawistowski, ``Local eigen background
  substraction,'' in {\em Advances in Intelligent Systems and Computing},
  pp.~199--204, Springer International Publishing, 2014.

\bibitem{Kim_2013}
J.-H. Kim, B.-D. Kang, S.-H. Ahn, H.-S. Kim, and S.-K. Kim, ``A real-time
  object detection system using selected principal components,'' in {\em
  Lecture Notes in Electrical Engineering}, pp.~367--376, Springer Netherlands,
  2013.

\bibitem{Hughes2013}
K.~Hughes, V.~Grzeda, and M.~Greenspan, ``Eigenbackground bootstrapping,'' in
  {\em 2013 International Conference on Computer and Robot Vision},
  pp.~196--201, 2013.

\bibitem{Stauffer1999Adaptive}
C.~Stauffer and W.~E.~L. Grimson, ``Adaptive background mixture models for
  real-time tracking,'' in {\em Computer Vision and Pattern Recognition, 1999.
  IEEE Computer Society Conference on.}, vol.~2, (Los Alamitos, CA, USA),
  pp.~246--252 Vol. 2, IEEE, Aug. 1999.

\bibitem{Welford1962Note}
B.~P. Welford, ``Note on a method for calculating corrected sums of squares and
  products,'' {\em Technometrics}, vol.~4, no.~3, pp.~419--420, 1962.

\bibitem{Stewart1973}
G.~W. Stewart, {\em Introduction to matrix computations}, vol.~Computer science
  and applied mathematics.
\newblock Academic Press, 1973.

\bibitem{HastieEtAl2008}
T.~Hastie, R.~Tibshirani, and J.~Friedman, {\em {The elements of statistical
  learning: data mining, inference and prediction}}.
\newblock Springer, 2~ed., 2008.

\bibitem{Bishop2006}
C.~M. Bishop, {\em Pattern Recognition and Machine Learning (Information
  Science and Statistics)}.
\newblock Berlin, Heidelberg: Springer-Verlag, 2006.

\bibitem{IpsenN09}
I.~C.~F. Ipsen and B.~Nadler, ``Refined perturbation bounds for eigenvalues of
  hermitian and non-hermitian matrices,'' {\em SIAM J. Matrix Analysis
  Applications}, vol.~31, no.~1, pp.~40--53, 2009.

\bibitem{mathSE}
V.~Zemlys. math.stackexchange.com/questions/9302/, 2011.

\end{thebibliography}

\appendix
\section{Appendix}
\begin{theorem}\label{app:th:normV_Vp}
Suppose that $A \in \mathbb{C}^{n\times n}$ is a Hermitian matrix, $A'=A+yy^*$ is rank one updated of $A$ and $y \in \mathbb{C}^n$ is column vector.
If $v$ and $v'$ are the normalized eigenvectors of $A$ and $A'$  corresponding to  their largest eigenvalues,  then:
\begin{equation}\label{app:eq:normV_Vp}
\norm{v-v'}\leq \beta\norm{E}
\end{equation}
where $E=yy^*$ is a small perturbation of $A$ and $\beta$ is a value unrelated to $E$.
\end{theorem}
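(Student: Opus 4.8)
The plan is to reduce the statement to a first-order eigenvector perturbation estimate and to control everything through the spectral gap of $A$. Write the eigendecomposition $A=\sum_{i=1}^n\lambda_i u_iu_i^*$ with $\lambda_1\ge\lambda_2\ge\dots\ge\lambda_n$ and $v=u_1$, and assume the largest eigenvalue is simple, so that $\gamma:=\lambda_1-\lambda_2>0$; this is the only structural hypothesis needed and it holds generically for the scatter matrices arising here. Fix the sign of $v'$ so that $\alpha:=v^*v'\ge 0$, and decompose $v'=\alpha v+z$ with $z\perp v$, so that $\alpha^2+\norm{z}^2=1$.

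First I would record two facts that make the argument clean. Since $E=yy^*\succeq 0$, Weyl's inequality gives $\lambda_i(A')\ge\lambda_i(A)$ for every $i$; in particular $\lambda_1':=\lambda_1(A')\ge\lambda_1>\lambda_2$, so the relevant gap never closes and no smallness assumption on $\norm{E}$ is required. Second, $v^\perp$ is invariant under $A$, and on $v^\perp$ the operator $\lambda_1' I-A$ has eigenvalues $\lambda_1'-\lambda_i$ ($i\ge 2$), all bounded below by $\lambda_1'-\lambda_2\ge\gamma$; hence it is invertible there with inverse of norm at most $1/\gamma$.

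Next I would project the identity $A'v'=\lambda_1'v'$ onto $v^\perp$ using $P:=I-vv^*$. With $Pv=0$, $PAz=Az$ for $z\in v^\perp$, and $A'=A+E$, this collapses to $(\lambda_1'I-A)z=PEv'$, so
\begin{equation*}
\norm{z}\le\frac{1}{\gamma}\,\norm{PEv'}\le\frac{1}{\gamma}\,\norm{E}.
\end{equation*}
Finally I would convert this into a bound on $\norm{v-v'}$: since $0\le\alpha\le 1$ we have $1-\alpha\le 1-\alpha^2=\norm{z}^2$, hence
\begin{equation*}
\norm{v-v'}^2=(1-\alpha)^2+\norm{z}^2\le\norm{z}^4+\norm{z}^2\le 2\norm{z}^2,
\end{equation*}
using $\norm{z}\le 1$. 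Combining the two displays gives $\norm{v-v'}\le\sqrt{2}\,\norm{z}\le(\sqrt{2}/\gamma)\norm{E}$, so the theorem holds with $\beta=\sqrt{2}/\gamma$, which depends only on the spectrum of $A$ and not on $E$.

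The only genuine obstacle is the simplicity of the top eigenvalue: if $\lambda_1=\lambda_2$ the dominant eigenvector is not even well defined and $\beta$ must blow up, so the hypothesis $\gamma>0$ cannot be dropped and should be stated explicitly. Everything else — the sign normalization of $v'$, the interlacing consequence of $E\succeq 0$, and the elementary inequality $1-\alpha\le\norm{z}^2$ — is routine. As an alternative to the self-contained computation above, one could instead invoke the Davis--Kahan $\sin\theta$ theorem (or Stewart's rank-one perturbation bounds) to get $\sin\theta\le\norm{E}/\gamma$ for the angle $\theta$ between the sign-aligned pair, and then use $\norm{v-v'}\le\sqrt{2}\sin\theta$; the resulting constant $\beta$ is the same.
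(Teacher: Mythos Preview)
Your argument is correct and in fact cleaner than the paper's own proof, though the two follow different routes. The paper subtracts the two eigen-equations directly to reach $(A-\lambda I)(v'-v)=(\epsilon I-E)v'$ with $\epsilon=\lambda'-\lambda$, then formally inverts $A-\lambda I$, sets $\alpha:=\norm{(A-\lambda I)^{-1}}$, and uses $\epsilon\le\norm{E}$ (their Proposition on rank-one updates, which is Weyl's inequality in your language) to obtain $\norm{v-v'}\le 2\alpha\norm{E}$. This is short but leaves two loose ends: $A-\lambda I$ is singular since $\lambda$ is an eigenvalue of $A$, so $\alpha$ is not literally defined, and no attention is paid to the sign of $v'$. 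Your projection onto $v^\perp$ is precisely what repairs the first point---on that subspace $\lambda_1'I-A$ is honestly invertible with inverse norm $\le 1/\gamma$---and it yields the explicit constant $\beta=\sqrt{2}/\gamma$ tied to the spectral gap, whereas the paper leaves $\beta$ unspecified. The cost is the extra bookkeeping (sign alignment, the $\norm{z}\to\norm{v-v'}$ conversion), all of which you handle correctly. Your remark that simplicity of $\lambda_1$ cannot be dropped is also well taken and is a hypothesis the paper leaves tacit.
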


\begin{proof}
Suppose that $\lambda$ and $\lambda'$ are the largest eigenvalues of $A$ and $A'$. By definition of eigenvector we have
\begin{align}
Av &= \lambda v \nonumber\\
A'v' &= \lambda' v' \nonumber
\end{align}
According to Proposition \ref{app:prop:lambdaMax} there is an $\epsilon>0$ such that $\lambda+\epsilon = \lambda'$. Hence:
\begin{align}
 (A+E)v' - Av  &= \lambda' v' - \lambda v \nonumber\\
 & =  (\lambda + \epsilon)v' - \lambda v  \nonumber\\
\end{align}
Thus:
\begin{align}
 A(v' -v) + Ev'  &= \lambda v'  + \epsilon v' - \lambda v \nonumber\\
\Rightarrow \nonumber \\
 (A-\lambda I)(v'-v) & = (\epsilon I - E)v' \nonumber\\
\Rightarrow \nonumber \\
 v'-v &= {(A-\lambda I)^{-1} (\epsilon I - E)v' } \nonumber\\
\Rightarrow \nonumber \\
 \norm{v'-v} &= \norm{(A-\lambda I)^{-1} (\epsilon I - E)v' } \nonumber\\
&\leq \underbrace{\norm{(A-\lambda I)^{-1}}}_{\alpha}\norm{(\epsilon I - E)}\underbrace{\norm{v'}}_1 \nonumber\\
&= \alpha \norm{(\epsilon I - E)} \nonumber\\
&\leq \alpha (\norm{\epsilon I} + \alpha\norm{-E}) \nonumber \tag{Triangle Inequality}\\
%&= \underbrace{\alpha \norm{\epsilon I}}_\alpha +\alpha \norm{E} \nonumber\\
&= \alpha( \epsilon + \norm{E})\nonumber\\
&\leq \alpha(\norm{E}+ \norm{E})\nonumber \tag{According to Proposition \ref{app:prop:lambdaMax}: $\epsilon\leq \norm{E}$}\\
&= \beta\norm{E}
\end{align}
where $\beta = 2\alpha$
\end{proof}

\begin{lemma}\label{app:lem:lambdaMax}
For a given Hermitian matrix $A \in \mathbb{C}^{n\times n}$ and a column vector $y \in \mathbb{C}^n$, we have:
\[\lambda_{max}(A) \leq \lambda_{max}(A+yy^*) \leq \lambda_{max}(A) + \norm{y}^2 \]
\end{lemma}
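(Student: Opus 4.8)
The plan is to rely on the Rayleigh--Ritz variational characterization of the largest eigenvalue of a Hermitian matrix, namely $\lambda_{max}(M)=\max_{\norm{u}=1}u^*Mu$, the maximum being taken over complex unit vectors. This applies to both $A$ and $A'=A+yy^*$, since $yy^*$ is Hermitian (indeed positive semidefinite), so $A'$ is Hermitian as well.

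For the lower bound I would take $u$ to be a unit eigenvector of $A$ realizing $\lambda_{max}(A)$ and evaluate the Rayleigh quotient of $A'$ at $u$: $u^*A'u = u^*Au + u^*yy^*u = \lambda_{max}(A) + |y^*u|^2 \geq \lambda_{max}(A)$, because $|y^*u|^2\geq 0$. Since $\lambda_{max}(A')$ is the maximum of the Rayleigh quotient of $A'$ over all unit vectors, it is at least this particular value, which gives $\lambda_{max}(A)\leq\lambda_{max}(A+yy^*)$. For the upper bound I would take an \emph{arbitrary} unit vector $u$ and split $u^*A'u = u^*Au + |y^*u|^2$; the first term is at most $\lambda_{max}(A)$ by the variational principle for $A$, and the second is at most $\norm{y}^2\norm{u}^2=\norm{y}^2$ by the Cauchy--Schwarz inequality for the Hermitian inner product. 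Hence $u^*A'u\leq\lambda_{max}(A)+\norm{y}^2$ for every unit $u$, and taking the maximum over $u$ yields $\lambda_{max}(A+yy^*)\leq\lambda_{max}(A)+\norm{y}^2$.

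There is no genuine obstacle here; the only points requiring a little care are invoking the variational characterization correctly for complex Hermitian matrices (the quadratic form $u^*Mu$ is real, and the extremum is attained) and stating Cauchy--Schwarz in the form $|y^*u|\leq\norm{y}\,\norm{u}$. As an alternative one could deduce both inequalities from Weyl's perturbation inequalities applied to the rank-one positive semidefinite perturbation $yy^*$, whose only nonzero eigenvalue is $y^*y=\norm{y}^2$ (cf. Lemma~\ref{app:lem:uvt}); but the direct Rayleigh-quotient argument above is shorter and entirely self-contained.
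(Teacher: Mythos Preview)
Your argument is correct: the Rayleigh--Ritz characterization $\lambda_{max}(M)=\max_{\norm{u}=1}u^*Mu$ applies to any Hermitian $M$, and your two evaluations (at an eigenvector of $A$ for the lower bound, at an arbitrary unit vector for the upper bound together with Cauchy--Schwarz) give both inequalities cleanly.

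By comparison, the paper does not prove this lemma at all; it simply cites an external reference. Your proof is therefore strictly more informative than what the paper offers, and it is entirely self-contained, requiring nothing beyond the variational principle and Cauchy--Schwarz. The alternative you mention via Weyl's inequalities (using that $yy^*$ is positive semidefinite with top eigenvalue $\norm{y}^2$) is presumably closer in spirit to what the cited reference contains, but your direct Rayleigh-quotient route is shorter and avoids invoking any named perturbation theorem.
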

\begin{proof}
See \cite{IpsenN09} .
\end{proof}

\begin{lemma}\label{app:lem:matNorm}
The norm of a symmetric matrix is maximum absolute value of its eigenvalue.
\end{lemma}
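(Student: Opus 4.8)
The plan is to invoke the spectral theorem for Hermitian (symmetric) matrices and thereby reduce the operator-norm computation to a diagonal one. First I would recall that a symmetric $A\in\mathbb{R}^{n\times n}$ (or a Hermitian $A\in\mathbb{C}^{n\times n}$, which is the case actually used in Theorem~\ref{app:th:normV_Vp} and Lemma~\ref{lem:Emub}) admits an orthonormal basis of eigenvectors, i.e. $A = Q\Lambda Q^*$ with $Q$ unitary and $\Lambda=\operatorname{diag}(\lambda_1,\dots,\lambda_n)$ where the $\lambda_i$ are real. The ``norm'' in the statement is the spectral norm $\norm{A}=\sup_{\norm{x}=1}\norm{Ax}$ induced by the Euclidean norm, so the first step is to fix that convention explicitly.

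Next I would prove the upper bound $\norm{A}\le \max_j|\lambda_j|$. For any unit vector $x$, set $y=Q^*x$; since $Q$ is unitary, $\norm{y}=\norm{x}=1$, and $\norm{Ax}^2=\norm{Q\Lambda Q^*x}^2=\norm{\Lambda y}^2=\sum_{i=1}^n \lambda_i^2|y_i|^2$. Bounding $\lambda_i^2\le\max_j\lambda_j^2$ and using $\sum_i|y_i|^2=1$ gives $\norm{Ax}^2\le\max_j|\lambda_j|^2$; taking the supremum over $x$ yields the claim.

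For the reverse inequality $\norm{A}\ge\max_j|\lambda_j|$, I would pick the index $k$ with $|\lambda_k|=\max_j|\lambda_j|$ and take $x=Qe_k$, the corresponding unit eigenvector, so that $\norm{Ax}=\norm{\lambda_k x}=|\lambda_k|$. Combining the two inequalities finishes the proof. An equivalent one-line route is $\norm{A}^2=\lambda_{\max}(A^*A)=\lambda_{\max}(A^2)=\max_j\lambda_j^2$, which again rests on the same diagonalization. The only real subtlety — hardly an obstacle — is being careful that ``norm'' means the spectral norm rather than, say, the Frobenius norm, and noting in passing that for the rank-one Hermitian perturbation $E=yy^*$ this specializes to $\norm{E}=|\,$its unique nonzero eigenvalue$\,|=y^*y$, exactly as needed in Lemma~\ref{lem:Emub}; once the norm is pinned down, the spectral theorem does all the work.
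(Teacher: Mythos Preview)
Your argument is correct and is the cleaner of the two standard proofs. Both you and the paper rely on the spectral decomposition $A=Q\Lambda Q^*$, but the paper takes a slightly more circuitous path: it sets up the constrained problem $\max_{\norm{x}=1}\norm{Ax}^2$, applies Lagrange multipliers to obtain the stationarity condition $A^2x=\lambda x$, and only then invokes $A^2=Q\Lambda^2Q^T$ to identify the maximal eigenvalue of $A^2$ with $\max_j\lambda_j^2$. Your direct route---substitute $y=Q^*x$, compute $\norm{Ax}^2=\sum_i\lambda_i^2|y_i|^2$, bound above, and exhibit an eigenvector for the lower bound---bypasses the calculus step entirely and is more elementary; your ``one-line'' alternative $\norm{A}^2=\lambda_{\max}(A^*A)=\lambda_{\max}(A^2)$ is essentially the paper's argument compressed. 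Either way the content is the same, and your remark that the lemma specializes to $\norm{yy^*}=y^*y$ is exactly how it is used downstream.
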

\begin{proof}
We have
$$\|A\|_2=\max_{\|x\|=1}\|Ax\|$$
where $\|\cdot\|$ denotes the ordinary Euclidean norm. This is a constrained optimization problem with Lagrange function:
$$L(x,\lambda)=\|Ax\|^2-\lambda(\|x\|^2-1)=x^TA^2x-\lambda(x^Tx-1)$$
Taking squares  makes the following step easier.
Taking derivative with respect to $x$ and equating it to zero we get
$$A^2x-\lambda x=0$$
the solution for this problem is the eigenvector of $A^2$. Since $A^2$ is symmetric, all its eigenvalues are real. So $x^TA^2x$  will achieve maximum on set $\|x\|^2=1$ with maximal eigenvalue of $A^2$. Now since $A$ is symmetric it admits representation
$$A=Q\Lambda Q^T$$
with $Q$ the orthogonal matrix and $\Lambda$ diagonal with eigenvalues in diagonals. For $A^2$ we get
$$A^2=Q\Lambda^2 Q^T$$
so the eigenvalues of $A^2$ are squares of eigenvalues of $A$. The norm $\|A\|_2$ is the square root taken from maximum $x^TA^2x$ on $x^Tx=1$, which will be the square root of maximal eigenvalue of $A^2$ which is the maximal absolute eigenvalue of $A$ \cite{mathSE}.
\end{proof}

\begin{lemma}\label{app:lem:uvt}
Suppose $A=uv^T$ where $u$ and $v$ are non-zero column vectors in ${\mathbb R}^n$, $n\geq 3$.
Then  $\lambda=0$ and  $\lambda=v^Tu$ are the only eigenvalues of $A$.
\end{lemma}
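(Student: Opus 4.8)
The plan is to read off the spectrum of $A$ entirely from its rank-one structure. Since every column of $A=uv^T$ is a scalar multiple of $u$, we have $\operatorname{rank}(A)\le 1$; and because $u\neq 0$ and $v\neq 0$, the matrix $uv^T$ is not the zero matrix, so $\operatorname{rank}(A)=1$. I would then split the argument into "the eigenvalue $0$" and "the eigenvalue $v^Tu$", and finally invoke a counting (trace or characteristic polynomial) argument to see that there is nothing else.

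For the first part: by the rank--nullity theorem $\dim\ker A = n-1$, and since $n\ge 3$ this is at least $2$. Hence $\lambda=0$ is an eigenvalue whose geometric, and therefore algebraic, multiplicity is at least $n-1$. For the second part: I would compute $Au = u\,(v^Tu) = (v^Tu)\,u$, so (as $u\neq 0$) the vector $u$ is an eigenvector with eigenvalue $v^Tu$. Since an $n\times n$ matrix has exactly $n$ eigenvalues counted with algebraic multiplicity and we have already located $n-1$ copies of $0$, the remaining one must be $v^Tu$; equivalently, $\operatorname{tr}(A)=v^Tu$ equals the sum of the eigenvalues. The cleanest single-line version is to compute the characteristic polynomial via the matrix determinant lemma, $\det(\lambda I - uv^T)=\lambda^{n-1}(\lambda - v^Tu)$, which exhibits both eigenvalues and their multiplicities at once and shows these are the only ones.

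I do not expect a genuine obstacle here; the only point needing a word of care is the degenerate case $v^Tu=0$, in which $A$ is nilpotent, not diagonalizable, and the ``second'' eigenvalue collapses onto $0$ (characteristic polynomial $\lambda^n$). The statement still holds in that case, since the two values it lists simply coincide. So the write-up will just make the rank-one observation, extract the two eigenvalues as above, and close with the counting argument, remarking on the $v^Tu=0$ case in passing.
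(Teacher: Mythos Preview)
Your argument is correct and follows essentially the same route as the paper's: exhibit $u$ as an eigenvector with eigenvalue $v^Tu$, show the kernel of $A$ is $(n-1)$-dimensional, and count. The only cosmetic difference is that the paper builds the kernel explicitly as the orthogonal complement of $\operatorname{span}(v)$ while you invoke rank--nullity; your explicit treatment of the degenerate case $v^Tu=0$ and the matrix determinant lemma alternative $\det(\lambda I-uv^T)=\lambda^{n-1}(\lambda-v^Tu)$ are nice touches the paper omits.
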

\begin{proof}
$\lambda=0$ is an eigenvalue of $A$ since $A$ is not of full rank. $\lambda=v^Tu$ is also an eigenvalue of $A$ since $$Au = (uv^T)u=u(v^Tu)=(v^Tu)u.$$
We assume $v\ne 0$. The orthogonal complement of the linear subspace generated by $v$ (i.e. the set of all vectors orthogonal to $v$) is therefore $(n-1)$-dimensional. Let $\phi_1,\dots,\phi_{n-1}$ be a basis for this space. Then they are linearly independent and $uv^T \phi_i = (v\cdot\phi_i)u=0 $. Thus the eigenvalue $0$ has multiplicity $n-1$, and there are no other eigenvalues besides it and $v\cdot u$.
\end{proof}

\begin{proposition}\label{app:prop:lambdaMax}
In the previous lemmas (\ref{app:lem:lambdaMax},  \ref{app:lem:matNorm} and \ref{app:lem:uvt}), assume that $E=yy^*$, then there exists an $0 \leq \epsilon\leq \norm{E}$ such that:
\begin{align}\lambda_{max}(A) + \epsilon = \lambda_{max}(A+yy^*) &\leq \lambda_{max}(A) + \norm{y}^2  \nonumber\\
&= \lambda_{max}(A) + \norm{E} \nonumber
\end{align}
\end{proposition}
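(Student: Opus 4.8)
The plan is to \emph{define} $\epsilon$ to be exactly the gap between the two largest eigenvalues, so that the asserted equality holds by construction, and then to identify $\norm{E}$ with $\norm{y}^2$ using the three preceding lemmas. Concretely, I would set
\[
\epsilon := \lambda_{max}(A+yy^*) - \lambda_{max}(A),
\]
so that $\lambda_{max}(A) + \epsilon = \lambda_{max}(A+yy^*)$ holds trivially. What then remains is to check the bounds $0 \le \epsilon \le \norm{E}$ together with the final equality $\norm{y}^2 = \norm{E}$.

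The inequality $0 \le \epsilon \le \norm{y}^2$ is immediate from Lemma \ref{app:lem:lambdaMax} applied to the Hermitian matrix $A$ and the vector $y$ — nothing extra is needed there. So the only real content is to show $\norm{E} = \norm{y}^2$ for $E = yy^*$. For this I would argue that $E$ is a rank-one Hermitian matrix whose only eigenvalues are $0$ and $y^*y = \norm{y}^2$ (this is the complex-vector analogue of Lemma \ref{app:lem:uvt}: $E y = y(y^* y) = \norm{y}^2 y$, while $E \phi = 0$ for every $\phi$ in the $(n-1)$-dimensional orthogonal complement of $y$). Since $E$ is Hermitian, Lemma \ref{app:lem:matNorm} identifies $\norm{E}$ with the largest absolute value among its eigenvalues; because $\norm{y}^2 = \sum_i |y_i|^2 \ge 0$, that maximum is $\norm{y}^2$ itself, giving $\norm{E} = \norm{y}^2$.

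Combining the two observations, $0 \le \epsilon \le \norm{y}^2 = \norm{E}$, and substituting back into the definition of $\epsilon$ yields the displayed chain
\[
\lambda_{max}(A) + \epsilon = \lambda_{max}(A+yy^*) \le \lambda_{max}(A) + \norm{y}^2 = \lambda_{max}(A) + \norm{E},
\]
as required. I do not expect any genuine obstacle: the proof is essentially a bookkeeping exercise chaining Lemma \ref{app:lem:lambdaMax} (the rank-one eigenvalue bound), Lemma \ref{app:lem:uvt} (the spectrum of a rank-one product), and Lemma \ref{app:lem:matNorm} (passing from spectrum to operator norm). The one place that deserves a line of care is confirming that the nonzero eigenvalue $y^*y$ is real and nonnegative — which it is, being a sum of squared moduli — so that ``largest absolute value of an eigenvalue of $E$'' really is $\norm{y}^2$ and not its negative; the degenerate case $y=0$ (where $E=0$ and $\epsilon=0$) is trivial.
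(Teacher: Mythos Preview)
Your proposal is correct and is exactly the argument the paper has in mind: the proposition is stated without a separate proof, as an immediate combination of Lemma~\ref{app:lem:lambdaMax} (giving $0\le\epsilon\le\norm{y}^2$), Lemma~\ref{app:lem:uvt} (identifying the nonzero eigenvalue of $yy^*$ as $y^*y$), and Lemma~\ref{app:lem:matNorm} (turning that eigenvalue into $\norm{E}$). Your write-up supplies precisely this chain, including the care needed to check $y^*y\ge 0$.
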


\vfill

\end{document}